\pgfplotsset{compat=1.17}
\newcommand{\wfomc}{\ensuremath{\mathsf{WFOMC}}}
\newcommand{\wmc}{\ensuremath{\mathsf{WMC}}}
\newcommand{\iwfomc}{\ensuremath{\mathsf{IncrementalWFOMC}}}
\newcommand{\fotwo}{\ensuremath{\textbf{FO}^2}}
\newcommand{\fothree}{\ensuremath{\textbf{FO}^3}}
\newcommand{\ctwo}{\ensuremath{\textbf{C}^2}}
\newcommand{\vecdelta}{\ensuremath{\boldsymbol{\delta}}}
\newcommand{\Mat}[1]{\ensuremath{\textbf{#1}}}
\newcommand{\BigO}[1]{\ensuremath{\mathcal{O}(#1)}}
\newcommand{\barw}{\ensuremath{\overline{w}}}
\newcommand{\hb}{\ensuremath{\mathsf{HB}}}
\newcommand{\sm}{Sm}
\newcommand{\fr}{Fr}
\newcommand{\Nat}{\mathds{N}}
\newcommand{\Real}{\mathds{R}}
\newclass{\hashP}{\texttt{\#}P}
\newcommand{\comm}[1]{\textcolor{cyan}{}}
\newtheorem{definition}{Definition}
\newtheorem{theorem}{Theorem}
\newtheorem{remark}{Remark}
\newtheorem{lemma}{Lemma}
\newtheorem{corollary}{Corollary}
\theoremstyle{definition}
\newtheorem{example}{Example}[section]
\title{Lifted Inference with Linear Order Axiom}
\author{%
  Jan T\'{o}th \\
   Faculty of Electrical Engineering \\
   Czech Technical University in Prague \\
   Prague, Czech Republic \\
   \and
   Ond\v{r}ej Ku\v{z}elka \\
   Faculty of Electrical Engineering \\
   Czech Technical University in Prague \\
   Prague, Czech Republic \\
}
\date{}
\begin{document}

\maketitle

\begin{abstract}
We consider the task of weighted first-order model counting (\wfomc) used for probabilistic inference in the area of statistical relational learning.
Given a formula $\phi$, domain size $n$ and a pair of weight functions, what is the weighted sum of all models of $\phi$ over a domain of size $n$?
It was shown that computing \wfomc{} of any logical sentence with at most two logical variables can be done in time polynomial in $n$.
However, it was also shown that the task is $\hashP_1$-complete once we add the third variable, which inspired the search for extensions of the two-variable fragment that would still permit a running time polynomial in $n$.
One of such extension is the two-variable fragment with counting quantifiers.
In this paper, we prove that adding a linear order axiom (which forces one of the predicates in $\phi$ to introduce a linear ordering of the domain elements in each model of $\phi$) on top of the counting quantifiers still permits a computation time polynomial in the domain size.
We present a new dynamic programming-based algorithm which can compute \wfomc{} with linear order in time polynomial in $n$, thus proving our primary claim.
\end{abstract}

\section{Introduction}
The task of probabilistic inference is at the core of many statistical machine learning problems and much effort has been invested into performing inference faster.
One of the techniques, aimed mostly at problems from the area of statistical relational learning \citep{SRL/Intro-Book}, being lifted inference \citep{Lifting/Intro-Book}.
A very popular way to perform lifted inference is to encode the particular problem as an instance of the weighted first-order model counting (\wfomc) task.
It is worth noting that applications of \wfomc{} range much wider, making it an interesting research subject in its own right.
For instance, it was used to aid in conjecturing recursive formulas in enumerative combinatorics \citep{Lifting/P-Recursions}.

Computing \wfomc{} in the two-variable fragment of first-order logic (denoted as \fotwo) can be done in time polynomial in the domain size, which is also referred to as \fotwo{} being \textit{domain-liftable} \citep{Lifting/FO2-domain-liftable}.
Unfortunately, it was also shown that the same does not hold in \fothree{} where the problem turns out to be $\hashP{}_1$-complete in general \citep{Lifting/FO3-intractable}.
That has inspired a search for extensions of \fotwo{} that would still be domain-liftable.

Several new classes have been identified since then.
\citet{Lifting/New-liftable-classes} introduced the classes $\textbf{S}^2\fotwo$ and $\textbf{S}^2\textbf{RU}$.
\citet{Lifting/Functionality-Constraint} extended the two-variable fragment with one functionality axiom and showed such language to still be domain-liftable.
That result was later generalized to the two-variable fragment with counting quantifiers (\ctwo) \citep{Lifting/C2-domain-liftable}.
Moreover, \citet{Lifting/C2-Tree-domain-liftable} proved that \ctwo{} extended by the tree axiom is still domain-liftable as well.%
\footnote{Other recent works in lifted inference not directly related to our work presented here are \citep{Lifting/FO2-cell-graphs}, \citep{Lifting/Closed-Form-Formula} and \citep{Lifting/Sampling}.}

Another extension of \ctwo{} can be obtained by adding a linear order axiom.
Linear order axiom \citep{Logic/linear-order} enforces some relation in the language to introduce a linear (total) ordering on the domain elements.
Such a constraint is inexpressible using only two variables, requiring special treatment.
This logic fragment has also received some attention from logicians \citep{Logic/C2}.

In this paper, we show that extending \ctwo{} with a linear order axiom yields another domain-liftable language.
We present a new dynamic programming-based algorithm for computing \wfomc{} in \ctwo{} with linear order.
The algorithm's running time is polynomial in the domain size meaning that \ctwo{} with linear order is domain-liftable.

Even though our result is mostly of theoretical interest, we still provide some interesting applications and experiments.
Among others, we perform exact inference in a Markov Logic Network \citep{MLNs} on a random graph model similar to the one of Watts and Strogatz \citep{Watts-Strogatz}.

\section{Background}
Let us now review necessary concepts, definitions and assumptions as well as notation.

We use boldface letters such as $\Mat{k}$ to differentiate vectors from scalar values such as $n$.
If we do not name individual vector components such as $\Mat{k} = (k_1, k_2,\ldots,k_d)$, then the $i$-th element of $\Mat{k}$ is denoted by $(\Mat{k})_i$.
Since our vectors only have non-negative entries, the sum of vector elements, i.e., $\sum_{i=1}^d (\Mat{k})_i$, always coincides with the $L^1$-norm.
Hence, we use $|\Mat{k}|$ as a shorthand for the sum.
We also introduce special name $\vecdelta_j$ for a vector such that
$$(\vecdelta_j)_i =
    \begin{cases}
        1 \text{ if } i = j,\\
        0 \text{ otherwise}.
    \end{cases}
$$

For a vector $\Mat{k} = (k_1, k_2,\ldots,k_d)$ with $|\Mat{k}|=n$,
$$\binom{|\Mat{k}|}{\Mat{k}}=\binom{n}{k_1,k_2,\ldots,k_d}$$
denotes the multinomial coefficient.
We make use of one non-trivial identity of multinomial coefficients \citep{Combinatorics/Book}, namely
$$\sum_{j=1}^d \binom{n-1}{\Mat{k}-\vecdelta_j}=\binom{n}{\Mat{k}}.$$

We also assume the set of natural numbers $\Nat$ to contain zero and that $0^0=1$.
We use $[n]$ to denote the set $\Set{1,2,\ldots,n}$.

\subsection{First-Order Logic}
We work with a function-free subset of first-order logic.
The language is defined by a finite set of \textit{constants} $\Delta$, a finite set of \textit{variables} $\mathcal{V}$ and a finite set of \textit{predicates} $\mathcal{P}$.
If the arity of a predicate $P \in \mathcal{P}$ is $k$, we also write $P/k$.
An \textit{atom} has the form $P(t_1, t_2, \ldots, t_k)$ where $P/k \in \mathcal{P}$ and $t_i \in \Delta$ $\cup$ $\mathcal{V}$.
A \textit{literal} is an atom or its negation.
A \textit{formula} is an atom and a literal.
More complex formulas may be formed from existing formulas by logical connectives, or by surrounding them with a universal ($\forall x$) or an existential ($\exists x$) quantifier where $x \in \mathcal{V}$.
A variable $x$ in a formula is called $free$ if the formula contains no quantification over $x$.
A formula is called a $sentence$ if it contains no free variables.
A formula is called \textit{ground} if it contains no variables.

As is customary in computer science, we adopt the \textit{Herbrand semantics} \citep{Logic/Herbrand} with a finite domain.
Since we have a finite domain with a one-to-one correspondence to the constant symbols, we denote the domain also with $\Delta$.
We denote the \textit{Herbrand base}  by \hb.
We use $\omega$ to denote a \textit{possible world}, i.e., any subset of \hb.
When we wish to restrict a possible world $\omega$ to only atoms with a particular predicate $P$, we write $\omega[P]$.

We work with logical sentences containing at most two variables (the language of \fotwo{}).
We assume our $\fotwo{}$ sentences to be constant-free.
Dealing with constants in lifted inference is a challenge in its own right.
Treatment of conditioning on evidence as well as using constants in sentences is available in other literature \citep{Lifting/liftability-with-evidence, Lifting/liftability-with-constants}.

\subsection{Weighted Model Counting and Lifted Formulation}
Throughout this paper, we study the \textit{weighted first-order model counting}.
We will also make use of its propositional variant, the \textit{weighted model counting}.
Let us formally define both these tasks.

\begin{definition}{(Weighted Model Counting)}
Let $\phi$ be a logical formula over some propositional language $\mathcal{L}$.
Let \hb{} denote the Hebrand base of $\mathcal{L}$ (i.e., the set of all propositional variables).
Let $w: \hb \mapsto \mathds{R}$ and $\overline{w}: \hb \mapsto \mathds{R}$ be a pair of \textit{weightings} assigning a \textit{positive} and a \textit{negative} weight to each variable in $\mathcal{L}$.
We define
\begin{equation*}
    \wmc(\phi, w, \overline{w}) = \sum_{\omega \subseteq \hb:\omega \models \phi} \prod_{l \in \omega} w(l) \prod_{l \in \hb \setminus \omega} \overline{w}(l).
\end{equation*}
\end{definition}

\begin{definition}{(Weighted First-Order Model Counting)}
Let $\phi$ be a logical formula over some relational language $\mathcal{L}$.
Let $n$ be the domain size.
Let \hb{} denote the Hebrand base of $\mathcal{L}$ over the domain $\Delta = \Set{1,2,\ldots,n}$.
Let $\mathcal{P}$ be the set of the predicates of the language $\mathcal{L}$ and
let $\mathsf{pred}: \hb \mapsto \mathcal{P}$ map each atom to its corresponding predicate symbol.
Let $w: \mathcal{P} \mapsto \mathds{R}$ and $\overline{w}: \mathcal{P} \mapsto \mathds{R}$ be a pair of \textit{weightings} assigning a \textit{positive} and a \textit{negative} weight to each predicate in $\mathcal{L}$.
We define
\begin{align*}
    \wfomc(\phi, n, w, \overline{w}) =
    \sum_{\omega \subseteq \hb:\omega \models \phi} \prod_{l \in \omega} w(\mathsf{pred}(l)) \prod_{l \in \hb \setminus \omega} \overline{w}(\mathsf{pred}(l)).
\end{align*}
\end{definition}

\begin{remark}
Since for any domain $\Delta$ of size $n$, we can define a bijective mapping $\pi$ such that $\pi(\Delta) = \Set{1,2,\ldots,n}$, \wfomc{} is defined for an arbitrary domain of size $n$.
\end{remark}

\subsection{Cells and Domain-Liftability of the Two-Variable Fragment}
We will not build on the original proof of domain-liftability of \fotwo{} \citep{Lifting/FO2-domain-liftable,Lifting/Skolemization}, but rather on the more recent one \citep{Lifting/FO3-intractable}.
Let us review some parts of that proof as we make use of them later in the paper.

An important concept is the one of a \textit{cell}.

\begin{definition}
A cell of a first-order formula $\phi$ is a maximally
consistent set of literals formed from atoms in $\phi$ using only
a single variable.
\end{definition}

\noindent We will denote cells as $C_1(x), C_2(x), \dots, C_p(x)$ and assume that they are ordered (indexed).
Note, however, that the ordering is purely arbitrary.

\begin{example}
Consider $\phi = \sm(x) \wedge \fr(x, y) \Rightarrow \sm(y)$. 

Then there are four cells:
\begin{align*}
C_1(x) &= \sm(x) \wedge \fr(x,x), \\
C_2(x) &= \neg \sm(x) \wedge \fr(x,x),\\
C_3(x) &= \neg \sm(x) \wedge \neg \fr(x,x),\\
C_4(x) &= \sm(x) \wedge \neg \fr(x,x).
\end{align*}

\end{example}

\noindent It turns out, that if we fix a particular assignment of domain elements to the cells and if we then condition on such evidence, the \wfomc{} computation decomposes into mutually independent and symmetric parts, simplifying the computation significantly.

When we say assignment of domain elements to cells, we mean a domain partitioning allowing empty partitions, that is ordered with respect to a chosen cell ordering.
Each partition $S_j$ then holds the constants assigned to the cell $C_j$.
Such partitioning can be captured by a vector.
We call such a vector a \textit{partitioning vector} and often shorten the term to a \textit{p-vector}.

\begin{definition}
Let $C_1,C_2,\dots,C_p$ be cells of some logical formula.
Let $n$ be the number of elements in a domain.
A partitioning vector (or a p-vector) of order $n$ is any vector $\Mat{k} \in \Nat^p$ such that $|\Mat{k}|=n$.
\end{definition}

\noindent Moreover, conditioning on some cells may immediately lead to an unsatisfiable formula.
To avoid unnecessary computation with such cells, we only work with \textit{valid cells} \citep{Lifting/FO2-cell-graphs}.

\begin{definition}
A valid cell of a first-order formula $\phi(x,y)$ is a cell of $\phi(x, y)$ and is also a model of $\phi(x, x)$.
\end{definition}

\begin{example}
Consider $\phi = F(x, y) \wedge (G(x) \vee H(x))$.

Cells setting both $G(x)$ and $H(x)$ to false are not valid cells of $\phi$.
\end{example}

\noindent Let us now introduce some notation for conditioning on particular (valid) cells.
Denote
\begin{align*}
    \psi_{ij}(x,y) &= \psi(x,y) \wedge \psi(y,x) \wedge C_i(x) \wedge C_j(y), \\
    \psi_{k}(x) &= \psi(x,x) \wedge C_k(x),
\end{align*}
and define
\begin{align}
    r_{ij} &= \wmc(\psi_{ij}(A, B), w', \overline{w}'), \\
    w_{k} &= \wmc(\psi_{k}(A), w, \overline{w}),
\end{align}
where $A,B\in\Delta$ and the weights $w'$, $\overline{w}'$ are the same as $w$, $\overline{w}$ except for the atoms appearing in the cells conditioned on.
Their weights are set to one, since their weights are already accounted for in the $w_k$ terms.

Finally, we can write
\begin{align}
\label{eq:2wfomc}
\wfomc(\phi, n, w, \overline{w}) =
\sum_{\Mat{k}\in\Nat^p:|\Mat{k}|=n} \binom{n}{\Mat{k}}&\prod_{i,j\in[p]:i<j}r_{ij}^{(\Mat{k})_i(\Mat{k})_j}\prod_{i\in[p]}r_{ii}^{\binom{(\Mat{k})_i}{2}}w_i^{(\Mat{k})_i},
\end{align}
which implies that universally quantified \fotwo{} is domain-liftable since Equation \ref{eq:2wfomc} may be evaluated in time polynomial in $n$.
Using a specialized \textit{skolemization} procedure for \wfomc{} \citep{Lifting/Skolemization}, we can easily extend the result to the entire \fotwo{} fragment.

\subsection{Cardinality Constraints and Counting Quantifiers}
\wfomc{} can be further generalized to \textit{\wfomc{} under cardinality constraints} \citep{Lifting/C2-domain-liftable}.
For a predicate $P \in \mathcal{P}$, we may extend the input formula by one or more cardinality constraints of the type $(|P|\bowtie k)$, where $\bowtie \in \Set{\leq, =, \geq}$ and $k \in \Nat$.
Intuitively, a cardinality constraint $(|P| = k)$ is satisfied in $\omega$ if there are exactly $k$ ground atoms with predicate $P$ in $\omega$.
Similarly for the inequality signs.

Counting quantifiers are a generalization of the traditional existential quantifier.
For a variable $x \in \mathcal{V}$, we allow usage of a quantifier of the form $\exists^{\bowtie k}x$, where $\bowtie \in \Set{\leq, =, \geq}$ and $k \in \Nat$.
Satisfaction of formulas with counting quantifiers is defined naturally, in a similar manner to the satisfaction of cardinality constraints.
For example, $\exists^{=k}x:\psi(x)$ is satisfied in $\omega$ if there are exactly $k$ constants $\Set{A_1, A_2, \ldots, A_k} \subseteq \Delta$ such that $\forall i \in [k]: \omega \models \psi(A_i)$.

\citet{Lifting/C2-domain-liftable} showed \ctwo{} to be a domain-liftable language.
That was done by reducing \wfomc{} in \ctwo{} to \wfomc{} in \fotwo{} under cardinality constraints and showing that the two-variable fragment with cardinality constraints is also domain-liftable.

\subsection{Linear Order Axiom}
Assuming logic with equality, we can encode that the predicate $R$ enforces a linear ordering on the domain using the following logical sentences \citep{Logic/linear-order}:
\begin{enumerate}
    \item $\forall x: R(x, x)$,
    \item $\forall x \forall y : R(x, y) \vee R(y, x)$,
    \item $\forall x \forall y : R(x,y) \wedge R(y,x) \Rightarrow (x=y)$,
    \item $\forall x\forall y \forall z : R(x,y) \wedge R(y,z) \Rightarrow R(x,z)$.
\end{enumerate}
The last sentence, expressing transitivity of the relation $R$, is the problematic one as it requires three logical variables.
Hence, we will not simply append this axiomatic definition to the input formula but rather make use of a specialized algorithm.
However, we must keep the axioms in mind, when constructing cells.
Substituting $x$ for both $y$ and $z$ into the axioms above leaves us with (after simplification) a single sentence enforcing reflexivity, i.e., $\forall x: R(x, x)$. Only cells adhering to this constraint can be valid.

Throughout this paper, we denote the constraint that a predicate $R$ introduces a linear order on the domain as $Linear(R)$.
For easier readability, we also make use of the traditional symbol $\leq$ for the linear order predicate whenever possible.
We also prefer the infix notation rather than the prefix one as it is more commonly used together with $\leq$ sign.
We also use $(A<B)$ as a shorthand for $(A\leq B) \wedge \neg(B \leq A)$.

We often write $\phi = \psi \wedge Linear(\leq)$, where we assume $\psi$ to be some logical sentence in \fotwo{} or \ctwo{} and $\leq$ one of the predicates of the language of $\psi$.
Let us formalize the model of such a sentence.
\begin{definition}
Let $\psi$ be a logical sentence possibly containing binary predicate $\leq$.
A possible world $\omega$ is a model of $\phi = \psi \wedge Linear(\leq)$ if and only if $\omega$ is a model of $\psi$, and $\omega[{\leq}]$ satisfies the linear order axioms.
\end{definition}

Our usual goal will be to compute \wfomc{} of $\phi$ over some domain.
In such cases, part of the input will be weightings $(w, \barw)$.
Since we are treating $\leq$ as a special predicate that is only supposed to enforce an ordering of domain elements in the models of $\phi$, we will always assume $w(\leq)=\overline{w}(\leq)=1$.

One more consideration should be given to our assumption of having equality in the language.
That is not a hard requirement since encoding equality in \ctwo{} (or \fotwo{} with cardinality constraints) is relatively simple, compared to full first-order logic.
For example, we may use the axioms:
\begin{enumerate}
    \item $\forall x: (x = x)$,
    \item $\forall x \exists^{=1} y : (x = y).$
\end{enumerate}

\begin{example}
As a simple example of what the linear order axiom allows us to express, consider the sentence $\phi = \forall x \forall y: \psi(x, y) \wedge Linear(\leq)$, where
$$\psi(x, y) = T(x) \wedge (x \leq y) \Rightarrow T(y).$$

How can we interpret models of $\phi$?
Due to $Linear(\leq)$, the $\leq$ predicate will define a total ordering on the domain, e.g., $1 \leq 2 \leq \ldots \leq n$.
Thus, we can think of the domain as a sequence.

The formula $\psi(x, y)$ then seeks to split that sequence into its beginning (\textit{head} of the sequence) and its end (\textit{tail} of the sequence).
The predicate $T/1$ denotes the tail of the sequence.
Whenever there is a constant, for which $T/1$ is set to true in a model (it is part of the tail), then all constants \textit{greater} also have $T/1$ set to true.
Constants, for which $T/1$ is set to false, then belong to the sequence head.
\end{example}

\section{Approach}
To prove our main result, we proceed as follows.
First, we present a new algorithm based on dynamic programming that computes \wfomc{} of a universally quantified \fotwo{} sentence in an incremental manner, and it does so in time polynomial in the domain size.
Note, that the assumption of universal quantification is not a limiting one, since we can apply the skolemization for \wfomc{} to our input sentence before running the algorithm.
Second, we show how to adapt the algorithm to compute \wfomc{} of a formula $\phi = \psi \wedge Linear(\leq)$, where $\psi$ is a universally quantified \fotwo{} sentence.
And third, we use the algorithm as a new \wfomc{} oracle in the reductions of \wfomc{} in \ctwo{} to \wfomc{} in \fotwo{}, thus proving \ctwo{} extended by a linear order axiom to be domain-liftable.

\subsection{New Algorithm}
Our algorithm for computing $\wfomc(\phi,n,w,\overline{w})$ for an \fotwo{} sentence $\phi$ works in an incremental manner.
The domain size is inductively enlarged in a similar way as in the \textit{domain recursion rule} \citep{Lifting/FO2-domain-liftable, Lifting/New-liftable-classes}.
For each domain size $i$, the \wfomc{} for each possible p-vector is computed.
The results are tracked in a table $T_i$ which maps possible p-vectors to real numbers (the weighted counts).
The results are then reused to compute entries in the table $T_{i+1}$.
See Algorithm \ref{algo:dp} for details.

\begin{algorithm}[tbh]
\caption{\iwfomc{}}
\label{algo:dp}
\textbf{Input}: An \fotwo{} sentence $\phi$, $n\in\Nat$, weightings $(w,\overline{w})$

\textbf{Output}: $\wfomc(\phi, n, w, \overline{w})$
\begin{algorithmic}[1]
\REQUIRE $\forall i \in \left[n\right] \forall \textbf{k} \in \Nat^p, |\textbf{k}|=i : T_i[\textbf{k}] = 0$
\FORALL{cell $C_j$}
    \STATE $T_1[\vecdelta_j] = w_j$
\ENDFOR
\FOR{$i=2$ to $n$}
    \FORALL{cell $C_j$}
        \FORALL{$(\textbf{k}_{old}, W_{old}) \in T_{i-1}$}
            \STATE $W_{new} \gets W_{old} \cdot w_j \cdot \prod_{l=1}^p r_{jl}^{(\textbf{k}_{old})_l}$
            \STATE $\textbf{k}_{new} \gets \textbf{k}_{old} + \vecdelta_j$
            \STATE $T_i[\textbf{k}_{new}] \gets T_i[\textbf{k}_{new}] + W_{new}$
        \ENDFOR
    \ENDFOR
\ENDFOR
\RETURN $\sum_{\textbf{k} \in \mathds{N}^p:|\textbf{k}|=n} T_n[\textbf{k}]$
\end{algorithmic}
\end{algorithm}

To compute an entry $T_{i+1}[\Mat{u}]$ for a p-vector $\Mat{u}$, we must find all entries $T_i[\Mat{k}]$ such that $\Mat{k} + \vecdelta_j = \Mat{u}$ and $C_j$ is one of the cells.
Intuitively speaking, we will assign the new domain element $(i+1)$ to the cell $C_j$, which will extend the existing models with new ground atoms containing the new domain element.
The models will be extended by atoms corresponding to the subformula $\psi_j(i+1)$ (which, if we are only working with valid cells, are simply the positive literals from $C_j$) and by atoms corresponding to the subformula $\psi_{jk}(i+1, i')$ for each cell $C_k$ and each domain element already processed (i.e., $1 \leq i' < i+1$).
As we can construct the new models by extending the old, we can also compute the new model weight from the old.
The weight update can be seen on Line 7 of Algorithm \ref{algo:dp}.

To prove correctness of Algorithm \ref{algo:dp}, we prove that its result is the same as is specified in Equation \ref{eq:2wfomc}.
For better readability, we split the proof into an auxiliary lemma, which proves a particular property of table entries at the end of each iteration $i$, and the actual statement of the algorithm's correctness.

\begin{lemma}
\label{l:induction}
At the end of iteration $i$ of the for-loop on lines $4-12$, it holds that
\begin{align*}
    T_{i}[\textbf{k}] = \binom{i}{\textbf{k}} \prod_{i,j\in[p]:i<j} r_{ij}^{(\textbf{k})_i(\textbf{k})_j} \cdot \prod_{i=1}^{p}r_{ii}^{\binom{(\textbf{k})_i}{2}}w_i^{(\textbf{k})_i},
\end{align*}
for any $i\geq2$ and any p-vector $\Mat{k}$ such that $|\Mat{k}|=i$.
\end{lemma}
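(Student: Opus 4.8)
The plan is to prove the statement by induction on the domain size $i$, which here coincides with the iteration index of the for-loop on lines $4$--$12$. For the base case I would take $i=1$, i.e.\ the state of the table immediately after the initialization on lines $1$--$3$. Every p-vector $\Mat{k}$ with $|\Mat{k}|=1$ has the form $\vecdelta_j$, for which the algorithm sets $T_1[\vecdelta_j]=w_j$. Evaluating the claimed closed form at $\Mat{k}=\vecdelta_j$ gives $\binom{1}{\vecdelta_j}=1$, empty (unit) $r$-products since every exponent $(\Mat{k})_a(\Mat{k})_b$ and $\binom{(\Mat{k})_a}{2}$ vanishes, and $\prod_a w_a^{(\vecdelta_j)_a}=w_j$, so the two agree. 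The lemma is stated only for $i\ge 2$ because line $1$ is not an iteration of the loop, but it is precisely this $i=1$ check that anchors the induction.

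For the inductive step I would first read off from the accumulation on lines $5$--$11$ the exact recurrence the algorithm realizes: the entry $T_i[\Mat{k}]$ receives one contribution for every cell $C_j$ with $(\Mat{k})_j\ge 1$, coming from the previous-table entry $\Mat{k}-\vecdelta_j$, so that
\begin{align*}
T_i[\Mat{k}] = \sum_{j:\,(\Mat{k})_j\ge 1} T_{i-1}[\Mat{k}-\vecdelta_j]\cdot w_j\cdot \prod_{l=1}^{p} r_{jl}^{(\Mat{k}-\vecdelta_j)_l}.
\end{align*}
I would then substitute the induction hypothesis for $T_{i-1}[\Mat{k}-\vecdelta_j]$ and show that, for each $j$, the entire product part collapses to exactly the product part of the target expression evaluated at $\Mat{k}$, so that it no longer depends on $j$ and can be pulled out of the sum. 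The weight factors are immediate, since $\prod_a w_a^{(\Mat{k}-\vecdelta_j)_a}\cdot w_j=\prod_a w_a^{(\Mat{k})_a}$.

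The main work — and the step I expect to be the real obstacle — is the exponent bookkeeping for the $r$-factors: checking that the extra factor $\prod_l r_{jl}^{(\Mat{k}-\vecdelta_j)_l}$ exactly upgrades the interaction products evaluated at $\Mat{k}-\vecdelta_j$ to the same products evaluated at $\Mat{k}$. Using the symmetry $r_{jl}=r_{lj}$ (which holds because $\psi_{jl}(A,B)$ and $\psi_{lj}(B,A)$ describe the same models), I would match the exponent of each $r_{ab}$ separately. For an off-diagonal $r_{ab}$ with $a<b$ and $j=a$, the needed identity is $((\Mat{k})_a-1)(\Mat{k})_b+(\Mat{k})_b=(\Mat{k})_a(\Mat{k})_b$ (and symmetrically for $j=b$), whereas for a diagonal $r_{aa}$ with $j=a$ it is the Pascal-type identity $\binom{(\Mat{k})_a-1}{2}+((\Mat{k})_a-1)=\binom{(\Mat{k})_a}{2}$; when $j\notin\{a,b\}$ the exponent is unchanged. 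Once this is verified, the common product part factors out and leaves $T_i[\Mat{k}]$ equal to those products times $\sum_{j}\binom{i-1}{\Mat{k}-\vecdelta_j}$. I would close the induction by applying the multinomial identity $\sum_{j=1}^p\binom{i-1}{\Mat{k}-\vecdelta_j}=\binom{i}{\Mat{k}}$ recalled in the preliminaries (terms with $(\Mat{k})_j=0$ contribute $0$, so restricting the sum is harmless), which reproduces exactly the claimed multinomial coefficient.
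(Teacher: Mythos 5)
Your proof is correct and follows essentially the same route as the paper's: induction on the iteration number, substitution of the hypothesis into the update on line 7, exponent bookkeeping via the symmetry $r_{ij}=r_{ji}$ together with $((\Mat{k})_a-1)(\Mat{k})_b+(\Mat{k})_b=(\Mat{k})_a(\Mat{k})_b$ and $\binom{m-1}{2}+(m-1)=\binom{m}{2}$, then factoring out the common product and closing with $\sum_{j}\binom{i-1}{\Mat{k}-\vecdelta_j}=\binom{i}{\Mat{k}}$. The only (harmless) differences are that you anchor the induction at $i=1$ (the initialization) rather than tracing the $i=2$ iteration explicitly, and you dispose of p-vectors with zero entries via the convention that multinomial coefficients with a negative entry vanish, where the paper instead handles that case by a separate argument.
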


\begin{proof}
Let us prove Lemma \ref{l:induction} by induction on the iteration number.

First, consider $i=2$.
When entering the loop for the first time, we have $T_{1}[\vecdelta_j] = w_j$ for each cell $C_j$.
Then, for a particular cell $C_j$ selected on Line 5, there are two cases to consider.

The first case is $\Mat{k}_{old}=\vecdelta_j$.
Then $W_{old} = w_j$ and 
\begin{align*}
    W_{new} &= w_j w_j \left(\prod_{i\in[p]:i\neq j}r_{ji}^{0}\right) r_{jj} = 1 \cdot r_{jj} w_j^2.
\end{align*}
Moreover, $\Mat{k}_{new}=2\vecdelta_j$.
Since this is the only scenario where we obtain such $\Mat{k}_{new}$ and since $\binom{2}{2\vecdelta_j}=1$, we have 
$$T_2[2\vecdelta_j] = \binom{2}{2\vecdelta_j} r_{jj} w_j^2.$$
    
The second possibility is that $\Mat{k}_{old}=\vecdelta_{j'}$, where $j' \neq j$.
Then $W_{old} = w_{j'}$ and $W_{new} = w_j' w_j r_{jj'}$.
The new p-vector $\Mat{k}_{new} = \vecdelta_j + \vecdelta_{j'}$ will also be obtained when the selected cell is $C_{j'}$ and $\Mat{k}_{old}=\vecdelta_{j}$.
The resulting $W_{new}$ will be the same as above.
Those values will be summed together (Line 9) and produce
$$T_2[\vecdelta_j+\vecdelta_{j'}] =
2 \cdot r_{jj'} w_j w_{j'} = \binom{2}{\vecdelta_j+\vecdelta_{j'}}  r_{jj'} w_j w_{j'}.$$
Hence, the lemma holds at the end of the first iteration.

Second, assume the claim holds at the end of iteration $i$.
Let us investigate the entry $T_{i+1}[\Mat{k}]$.
For now, consider $\Mat{k}$ without any zero entries.
Then there are $p$ cases that will produce a particular p-vector $\Mat{k} = (k_1, k_2, \ldots, k_p)$, namely
\begin{align*}
    \Mat{k}_{old} &= (k_1 - 1, k_2, \ldots, k_p) \text{ and cell } C_1 \\
    \Mat{k}_{old} &= (k_1, k_2 -1, \ldots, k_p) \text{ and cell } C_2 \\
    &\vdots\\
    \Mat{k}_{old} &= (k_1, k_2, \ldots, k_p - 1) \text{ and cell } C_p.
\end{align*}
For a particular cell $C_j$ and $\Mat{k}_{old} = \Mat{k} - \vecdelta_j$, we have by induction hypothesis:
\begin{align*}
  W_{old} = \binom{i}{\Mat{k}-\vecdelta_j} r_{jj}^{\binom{(\Mat{k})_j-1}{2}}w_j^{(\Mat{k})_j-1}
  \prod_{i,l\in[p]:i<l,i\neq j \neq l}r_{il}^{(\Mat{k})_i(\Mat{k})_l}
  \prod_{i\in[p]:i \neq j}r_{ii}^{\binom{(\Mat{k})_i}{2}}w_i^{(\Mat{k})_i}
  \prod_{i\in[p]:i\neq j}r_{ji}^{((\Mat{k})_j-1)(\Mat{k})_i}.
\end{align*}
Following the weight update on Line 7, this value will become
\begin{align*}
  W_{new} &=\binom{i}{\Mat{k}-\vecdelta_j} r_{jj}^{\binom{(\Mat{k})_j-1}{2}+(\Mat{k})_j}w_j^{(\Mat{k})_j-1+1}\\
  &\prod_{i,l\in[p]:i<l,i\neq j \neq l}r_{il}^{(\Mat{k})_i(\Mat{k})_l}
  \prod_{i\in[p]:i \neq j}r_{ii}^{\binom{(\Mat{k})_i}{2}}w_i^{(\Mat{k})_i}
  \prod_{i\in[p]:i\neq j}r_{ji}^{((\Mat{k})_j-1)(\Mat{k})_i+(\Mat{k})_i}.
\end{align*}
Manipulating the powers and using the property $r_{ij}=r_{ji}$, we obtain
\begin{align*}
  W_{new} &=\binom{i}{\Mat{k}-\vecdelta_j} \prod_{i,l\in[p]:i<l}r_{il}^{(\Mat{k})_i(\Mat{k})_l}
  \prod_{i\in[p]}r_{ii}^{\binom{(\Mat{k})_i}{2}}w_i^{(\Mat{k})_i}.
\end{align*}
Observe that the product after the multinomial coefficient will be the same for any of the $p$ cases outlined above.
Hence, the final new table entry is given by
\begin{align*}
    T_{i+1}[\Mat{k}] &= \sum_{j=1}^p\binom{i}{\Mat{k}-\vecdelta_j}
    \prod_{i,l\in[p]:i<l}r_{il}^{(\Mat{k})_i(\Mat{k})_l}
  \prod_{i\in[p]}r_{ii}^{\binom{(\Mat{k})_i}{2}}w_i^{(\Mat{k})_i}\\
  &= \prod_{i,l\in[p]:i<l}r_{il}^{(\Mat{k})_i(\Mat{k})_l}
  \prod_{i\in[p]}r_{ii}^{\binom{(\Mat{k})_i}{2}}w_i^{(\Mat{k})_i}
  \sum_{j=1}^p\binom{i}{\Mat{k}-\vecdelta_j}\\
  &=\binom{i+1}{\Mat{k}}
  \prod_{i,l\in[p]:i<l}r_{il}^{(\Mat{k})_i(\Mat{k})_l}
  \prod_{i\in[p]}r_{ii}^{\binom{(\Mat{k})_i}{2}}w_i^{(\Mat{k})_i},
\end{align*}
which is consistent with the claim.

The last thing to consider is if there are some zero entries in $\Mat{k}$.
Suppose there are $z$ of them and w.l.o.g. assume they are on the positions $(p-z+1), (p-z+2), \ldots p$.
Then we obtain a result such that
\begin{align*}
T_{i+1}[\Mat{k}] =
\prod_{i,j\in[p]:i<j} r_{ij}^{(\textbf{k})_i(\textbf{k})_j} \prod_{i=1}^{p}r_{ii}^{\binom{(\textbf{k})_i}{2}}w_i^{(\textbf{k})_i} \sum_{j=1}^{p-z}\binom{i}{\Mat{k}-\vecdelta_j}.
\end{align*}
Denote $\Mat{u}$ the first $p-z$ components of the vector $\Mat{k}-\vecdelta_j$.
Note that $\binom{i}{\Mat{k}-\vecdelta_j}=\binom{i}{\Mat{u}}$, since the last $z$ entries are all zeros.
Hence, even now it holds that
\begin{align*}
T_{i+1}[\Mat{k}] &= \binom{i+1}{\Mat{k}}\prod_{i,j\in[p]:i<j} r_{ij}^{(\textbf{k})_i(\textbf{k})_j} \prod_{i=1}^{p}r_{ii}^{\binom{(\textbf{k})_i}{2}}w_i^{(\textbf{k})_i}.
\end{align*}
\end{proof}

\begin{theorem}
\label{th:algo-fo2}
Algorithm \ref{algo:dp} computes $\wfomc{}(\phi, n, w, \overline{w})$ of a universally quantified \fotwo{} sentence $\phi$ in prenex normal form.
Moreover, it does so in time polynomial in the domain size $n$.
\end{theorem}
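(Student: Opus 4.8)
The plan is to prove the two assertions of the theorem separately, deriving correctness almost entirely from Lemma~\ref{l:induction} and then analyzing the running time directly from the structure of the loops. For correctness I would show that the value returned on the last line of Algorithm~\ref{algo:dp} coincides with the right-hand side of Equation~\ref{eq:2wfomc}, which for a universally quantified \fotwo{} sentence in prenex normal form is exactly $\wfomc(\phi, n, w, \overline{w})$. Lemma~\ref{l:induction} already supplies, for every $i \geq 2$ and every p-vector $\Mat{k}$ with $|\Mat{k}| = i$, the closed form of $T_i[\Mat{k}]$ matching the summand of Equation~\ref{eq:2wfomc}. Instantiating this at $i = n$ and summing over all p-vectors of order $n$ — precisely the quantity returned by the algorithm — reproduces Equation~\ref{eq:2wfomc} verbatim. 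The only gap is the degenerate case $n = 1$, which the lemma does not cover; here I would check directly from the initialization $T_1[\vecdelta_j] = w_j$ that the returned sum $\sum_j w_j$ equals $\wfomc(\phi, 1, w, \overline{w})$, since over a single domain element each valid cell $C_j$ contributes exactly its weight $w_j = \wmc(\psi_j(A), w, \overline{w})$ and no binary interaction term $r_{ij}$ ever arises.

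For the running time, the key observation is that the number of distinct p-vectors of order $i$ is $\binom{i + p - 1}{p - 1}$, which for a fixed number of cells $p$ (a constant determined by $\phi$, not by $n$) is polynomial in $i$, hence in $n$. The precomputation of the coefficients $r_{ij}$ and $w_k$ amounts to $\BigO{p^2}$ propositional \wmc{} calls over a constant-sized Herbrand base and is independent of $n$. The main loop on lines~4--12 executes $n - 1$ times; each pass ranges over $p$ cells and at most $\binom{i + p - 2}{p - 1}$ stored entries of $T_{i-1}$, and the weight update on line~7 costs $\BigO{p}$ arithmetic operations (or $\BigO{p \log n}$ if the powers $r_{jl}^{(\Mat{k}_{old})_l}$, whose exponents are bounded by $n$, are computed by repeated squaring). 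Multiplying these factors yields a bound that is polynomial in $n$, as required.

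I expect the substantive work to already reside in Lemma~\ref{l:induction}, so that the correctness half collapses to bookkeeping plus the isolated $n = 1$ base case. The main obstacle is therefore the complexity argument: one must make precise that $p$ is a constant with respect to $n$ and that the per-entry arithmetic — in particular the exponentiations by the bounded exponents $(\Mat{k}_{old})_l \leq n$ — stays polynomially bounded, so that the product of the loop ranges and the per-step cost remains polynomial in the domain size.
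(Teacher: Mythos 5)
Your proposal is correct and follows essentially the same route as the paper: correctness is obtained by instantiating Lemma~\ref{l:induction} at $i=n$ and observing that the sum on Line~13 reproduces Equation~\ref{eq:2wfomc}, and the running time follows from counting loop iterations, with the number of p-vectors polynomial in $n$ for fixed $p$, yielding a bound of the form $\BigO{n^{p+1}}$. Your explicit treatment of the $n=1$ base case (which Lemma~\ref{l:induction} does not cover) and of the cost of the exponentiations on Line~7 are small refinements the paper leaves implicit, but they do not change the structure of the argument.
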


\begin{proof}
By Lemma \ref{l:induction}, we have
\begin{align*}
    T_n[\Mat{k}] = \binom{n}{\textbf{k}} \prod_{i,j\in[p]:i<j} r_{ij}^{(\textbf{k})_i(\textbf{k})_j} \prod_{i=1}^{p}r_{ii}^{\binom{(\textbf{k})_i}{2}}w_i^{(\textbf{k})_i}.
\end{align*}
On Line 13, all those entries are summed together which produces a formula identical to the one in Equation \ref{eq:2wfomc}.

As for the second part of the claim.
The first loop on lines $1-3$ runs in time \BigO{1} with respect to $n$.
The large loop on lines $4-12$ runs in \BigO{n}.
The first nested loop (lines $5-11$) is again independent of $n$, and the second (lines $6-10$) runs in \BigO{n^p}.
The final sum on Line 13 also runs in \BigO{n^p}.
Overall, we have
\begin{align*}
    \BigO{n} \cdot \BigO{n^p} + \BigO{n^p} \in \BigO{n^{p+1}},
\end{align*}
which is polynomial in the domain size $n$.
\end{proof}

\subsection{Enforcing a Linear Order}
When adding the linear order axiom to the input sentence $\psi$, each model of $\psi$ will be with respect to some domain ordering.
Assume we find the set $\Omega$ of all models for one fixed ordering.
Having a domain permutation $\pi$, $$\Omega' = \bigcup_{\omega\in\Omega}\Set{\pi(\omega)}$$ will be the set of all models with respect to the new domain ordering defined by $\pi$.
Hence, the situation is symmetric for any particular ordering of the domain.

\begin{theorem}
\label{th:n-factorial}
Let $\phi$ be a formula of the form $\phi = \psi(x,y) \wedge Linear(\leq)$, where $\psi(x,y)$ is a universally quantified \fotwo{} sentence and $\leq$ is one of its predicates.
Let $\Delta$ be a domain over which we want to compute \wfomc{}.

If $\omega \models \phi$ and $\pi$ is a permutation of $\Delta$, such that $\pi(\Delta) \neq \Delta$, then $\pi(\omega) \models \phi$, where application of $\pi$ to a possible world is defined by appropriate substitution of the domain elements in ground atoms. Moreover, $\omega \neq \pi(\omega)$.
\end{theorem}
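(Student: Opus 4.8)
The plan is to handle the two assertions separately, with the single unifying observation that the permutation $\pi$ induces an isomorphism of relational structures from $\omega$ onto $\pi(\omega)$. First I would make precise the paper's definition of applying $\pi$ to a world: renaming each ground atom $P(t_1,\dots,t_k) \in \omega$ to $P(\pi(t_1),\dots,\pi(t_k))$ is exactly the pushforward of the structure $\omega$ along the bijection $\pi$, so that $\pi$ is a structure isomorphism $\omega \to \pi(\omega)$. I would also note that the hypothesis ``$\pi(\Delta) \neq \Delta$'' is to be read as $\pi$ not being the identity permutation (as a set $\pi(\Delta)=\Delta$ always holds; it is the induced reordering of the elements that differs).

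For the first claim, that $\pi(\omega) \models \phi$, I would argue that both conjuncts of $\phi$ are invariant under isomorphism. Since $\psi$ is a genuine first-order sentence, $\omega \models \psi$ implies $\pi(\omega) \models \psi$ by the standard fact that isomorphic structures agree on all first-order sentences (this holds regardless of whether $\psi$ itself mentions $\leq$, since $\pi$ transports the interpretation of every predicate correctly). For the conjunct $Linear(\leq)$, I would verify directly that the four defining properties of $\omega[{\leq}]$—reflexivity, totality, antisymmetry, transitivity—transfer to $\pi(\omega)[{\leq}]$, each being a relational property preserved under relabeling along the bijection $\pi$. Hence $\pi(\omega)[{\leq}]$ is again a linear order and $\pi(\omega) \models \phi$.

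For the second claim, that $\omega \neq \pi(\omega)$, the crux is the rigidity of finite linear orders. I would suppose toward a contradiction that $\omega = \pi(\omega)$; then in particular $\omega[{\leq}] = \pi(\omega)[{\leq}]$, which is exactly the statement that $\pi$ is an automorphism of the linear order $(\Delta, \omega[{\leq}])$. I would then show that the only order-preserving bijection of a finite linear order onto itself is the identity: assigning to each element its rank (the number of elements below it with respect to $\leq$) yields a bijection $\Delta \to [n]$ that every automorphism must preserve, forcing $\pi$ to fix every element and so $\pi = \mathrm{id}$. This contradicts the hypothesis that $\pi$ is not the identity.

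I expect the argument to be short, and the only step requiring genuine care is this rigidity step—making rigorous, via the rank function or an equivalent induction on the minimum element, that a finite linear order admits no nontrivial automorphism. Everything else reduces to the preservation of first-order properties under isomorphism, which is routine.
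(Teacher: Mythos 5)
Your proof is correct and follows essentially the same route as the paper's. The paper likewise splits $\omega$ into $\omega[{\leq}]$ and $\omega_\psi = \omega \setminus \omega[{\leq}]$, argues that $\pi(\omega_\psi)$ still models $\forall x \forall y: \psi(x,y)$ because $\phi$ is constant-free (your isomorphism-invariance phrasing is the same argument in standard model-theoretic dress), and concludes $\omega \neq \pi(\omega)$ because $\pi(\omega[{\leq}])$ ``defines a different ordering'' than $\omega[{\leq}]$. The one place you go beyond the paper is precisely that last step: the paper asserts the orderings differ without justification, whereas you correctly identify that the assertion is exactly the rigidity of finite linear orders (no non-identity automorphism) and prove it via the rank function. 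Since this is also the only point where the hypothesis that $\pi$ is not the identity --- your sensible reading of the paper's awkward ``$\pi(\Delta) \neq \Delta$'', which is false as a statement about sets --- actually gets used, making it explicit is a genuine tightening rather than pedantry, though it does not change the overall structure of the argument.
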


\begin{proof}
If $\omega$ is a model of $\phi$, we can partition $\omega$ into two disjoint sets:
$\omega[\leq]$ holding only atoms with the predicate $\leq$ and $\omega_\psi = \omega  \setminus \omega[\leq]$.
$\omega[\leq]$ defines an ordering of $\Delta$ and $\omega_\psi$ is then a model of $\forall x \forall y: \psi(x,y)$ respecting the ordering defined by $\omega[\leq]$.
Applying the permutation $\pi$ to $\omega[\leq]$ will define a different domain ordering.

Since there are no constants in $\phi$, $\pi(\omega_\psi)$ will still be a model of $\forall x \forall y: \psi(x,y)$ (we simply apply a different substitution to the variables in $\psi$).
Moreover, since $\omega_\psi$ respected the ordering defined by $\omega[\leq]$,$\pi(\omega_\psi)$ will respect the new ordering defined by $\pi(\omega[\leq])$.

Hence $\pi(\omega) = \pi(\omega[\leq]) \cup \pi(\omega_\psi)$ is another model of $\phi$ and it must be different from $\omega$, because $\pi(\omega[\leq])$ defines a different ordering than $\omega[\leq]$.
\end{proof}

\begin{corollary}
\label{cor:n-factorial}
To compute $\wfomc{}(\phi, n, w, \overline{w})$, where $\phi = \psi(x,y) \wedge Linear(\leq)$, we can compute \wfomc{} for one ordered domain of size $n$ and then multiply the result by the factorial of $n$, since there are $n!$ different permutations of the domain.
\end{corollary}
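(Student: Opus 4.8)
The plan is to partition the set of all models of $\phi$ according to the total order that each model induces on the domain, and then invoke Theorem \ref{th:n-factorial} to show that every block of this partition carries exactly the same weighted sum.

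First I would note that every model $\omega \models \phi$ splits as $\omega = \omega[{\leq}] \cup \omega_\psi$, and that $\omega[{\leq}]$ realizes exactly one of the $n!$ total orders on $\Delta$. This induces a partition of the models into blocks $\Omega_O = \Set{\omega \models \phi : \omega[{\leq}] \text{ realizes } O}$, one for each total order $O$ on $\Delta$. These blocks are pairwise disjoint, since two models sitting in different blocks already differ on their $\leq$-reduct, so no world is counted twice. Fixing a reference order $O_0$ (say $1 \leq 2 \leq \cdots \leq n$), the weighted count over the single block $\Omega_{O_0}$ is precisely the quantity ``$\wfomc$ for one ordered domain'' that the corollary refers to.

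Next I would establish that all $n!$ blocks contribute equally. For an arbitrary order $O$, choose a permutation $\pi$ of $\Delta$ mapping $O_0$ to $O$. By Theorem \ref{th:n-factorial}, $\pi$ sends each $\omega \in \Omega_{O_0}$ to a model $\pi(\omega) \models \phi$ whose $\leq$-reduct realizes $O$, i.e. into $\Omega_O$; applying $\pi^{-1}$ gives the inverse map, so $\pi$ restricts to a bijection $\Omega_{O_0} \to \Omega_O$. The one ingredient not already delivered by Theorem \ref{th:n-factorial} is that $\pi$ is weight-preserving, and this is where I would be most careful — though it turns out to be routine rather than a real obstacle. Because $\phi$ is constant-free and the weighting is assigned per predicate, $\pi$ acts on $\hb$ as a predicate-preserving bijection: it maps $\omega$ and its complement to $\pi(\omega)$ and its complement while leaving $\mathsf{pred}(l)$ unchanged for every atom, so the product $\prod_{l \in \omega} w(\mathsf{pred}(l)) \prod_{l \in \hb \setminus \omega} \barw(\mathsf{pred}(l))$ is invariant under $\pi$.

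Finally, combining the weight-preserving bijection with disjointness gives that each $\Omega_O$ has the same weighted sum as $\Omega_{O_0}$, so summing over all $n!$ total orders yields
\begin{align*}
\wfomc(\phi, n, w, \barw) = \sum_{O} \Big(\text{weighted sum over } \Omega_O\Big) = n! \cdot \Big(\text{weighted sum over } \Omega_{O_0}\Big),
\end{align*}
which is exactly the claim. I expect the only subtleties to be the two bookkeeping checks just made — weight invariance under $\pi$ and disjointness of distinct-order blocks — both of which follow directly from the constant-free, per-predicate setup and from the fact that $\omega[{\leq}]$ uniquely pins down the order.
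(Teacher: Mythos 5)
Your proof is correct and takes essentially the same symmetry route as the paper, which justifies the corollary directly from Theorem \ref{th:n-factorial} together with its preceding observation that a domain permutation $\pi$ carries the set of models for one fixed ordering onto the set of models for the ordering defined by $\pi$. The only difference is that you make the bookkeeping explicit---disjointness of the order-indexed blocks $\Omega_O$ and weight invariance under the predicate-preserving action of $\pi$ on the Herbrand base---details the paper leaves implicit.
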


\noindent Let us now show that we can compute \wfomc{} of a formula $\phi=\psi\wedge Linear(\leq)$ for a fixed domain ordering using only slightly modified Algorithm \ref{algo:dp}.
The modified algorithm will take advantage of the fact that when we are processing the $i$-th domain element, it holds that $i' < i$ for all already processed domain elements $i'$.
Hence, when extending the domain by the constant $i$ (and consequently, extending the models by atoms containing $i$), the only difference will be in the models of the subformulas $\psi_{ij}(A, B)$, where $A,B\in\Delta$.
The one constant must be ``greater'' than the other in the sense of the enforced domain ordering.
Thus, we only need to redefine $r_{ij}$ to reflect this.
Then, we may prove that \fotwo{} with a linear order axiom is domain-liftable in a similar manner to how we proved correctness of Algorithm \ref{algo:dp} for \fotwo{} alone.

Let us redefine $r_{ij}$ as
\begin{align}
\label{def:new-r}
  r_{ij}= \wmc(\psi_{ij}(A,B)\wedge(B \leq A)\wedge \neg(A \leq B), w', \overline{w}')  
\end{align}

\begin{theorem}
\label{th:algo-fo2+lo}
\iwfomc{} with $r_{ij}$ values from Equation \ref{def:new-r} computes $\wfomc(\phi, n, w, \overline{w})$ of a universally quantified \fotwo{} sentence $\phi$ in prenex normal form on the ordered domain $\Delta = \Set{1\leq 2 \leq \ldots \leq n}$.
Moreover, it does so in time polynomial in the domain size $n$.
\end{theorem}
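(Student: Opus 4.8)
The plan is to mirror the two-step structure used for Theorem \ref{th:algo-fo2}: first an inductive lemma describing the table entries $T_i[\Mat{k}]$, then the reduction of the final sum to $\wfomc$ on the fixed ordered domain, and finally the (essentially unchanged) complexity count. The crucial conceptual difference from the purely \fotwo{} case is that the redefinition of $r_{ij}$ in Equation \ref{def:new-r} is \emph{asymmetric}: in general $r_{ij} \neq r_{ji}$, because $r_{ij}$ now records the weighted sum over two-types of an ordered pair in which the $C_i$-element is the greater one and the $C_j$-element the smaller one. Consequently the symmetry step used in the proof of Lemma \ref{l:induction} (merging the $r_{ji}$ factors into the $i<l$ product) is no longer available, so $T_i[\Mat{k}]$ will \emph{not} collapse to the clean closed form $\binom{i}{\Mat{k}}\prod r^{\cdots}$, and I would resist trying to recover one.

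Instead, I would state the replacement lemma keeping the right-hand side as an explicit sum over cell assignments. Writing $\sigma\colon[i]\to[p]$ for an assignment of the ordered elements $1<2<\dots<i$ to cells with $|\sigma^{-1}(l)|=(\Mat{k})_l$, the claim is
\begin{align*}
 T_i[\Mat{k}] = \sum_{\sigma}\ \Bigl(\prod_{1\le a<b\le i} r_{\sigma(b)\,\sigma(a)}\Bigr)\prod_{l=1}^p w_l^{(\Mat{k})_l}.
\end{align*}
The induction on $i$ is then \emph{simpler} than that of Lemma \ref{l:induction}. The base case $i=1$ is immediate from lines $1$--$3$. For the step, observe that since the newly added element $i+1$ is the greatest in $1<2<\dots<i+1$, every new pair $(a,i+1)$ has $i+1$ as its greater endpoint; hence placing $i+1$ into cell $C_j$ multiplies the weight of each old assignment on $[i]$ with counts $\Mat{k}_{old}=\Mat{u}-\vecdelta_j$ by exactly $w_j\prod_{l}r_{jl}^{(\Mat{u}-\vecdelta_j)_l}$, which is precisely the update on Line $7$. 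Summing this bijection over the $p$ choices of $j$ reproduces $T_{i+1}[\Mat{u}]=\sum_j T_i[\Mat{u}-\vecdelta_j]\,w_j\prod_l r_{jl}^{(\Mat{u}-\vecdelta_j)_l}$, closing the induction; zero entries of $\Mat{u}$ are handled as in Lemma \ref{l:induction} by dropping the corresponding summands.

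It remains to identify $\sum_{\Mat{k}:|\Mat{k}|=n}T_n[\Mat{k}]$ with $\wfomc$ on the fixed ordered domain. Expanding the lemma over all $\Mat{k}$ turns this into $\sum_{\sigma:[n]\to[p]}\prod_{a<b} r_{\sigma(b)\sigma(a)}\prod_a w_{\sigma(a)}$, a sum over \emph{all} cell assignments of the domain. I would argue that this is the standard cell decomposition behind Equation \ref{eq:2wfomc}, specialized to a single fixed order: for a universally quantified \fotwo{} sentence the weight of a model factorizes into per-element (one-type) contributions $w_l$ and, once the cells are fixed, into mutually independent per-pair (two-type) subproblems. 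Since $\omega[{\leq}]$ is pinned to $1<2<\dots<n$ and $w(\leq)=\barw(\leq)=1$, the two-type subproblem of the pair $\{a,b\}$ with $a<b$ and endpoints in cells $C_{\sigma(b)},C_{\sigma(a)}$ contributes exactly the weighted sum $r_{\sigma(b)\sigma(a)}$ of Equation \ref{def:new-r}, with reflexive $\leq$-atoms absorbed into the valid cells as usual. Thus the assignment sum equals $\wmc$ over the ordered domain, which is the claimed value; combined with Corollary \ref{cor:n-factorial} this later yields the full $\wfomc$ of $\psi\wedge Linear(\leq)$. The complexity analysis is then verbatim that of Theorem \ref{th:algo-fo2}, since the redefinition of $r_{ij}$ changes only the precomputed constants and not the loop structure, giving running time $\BigO{n^{p+1}}$.

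The main obstacle I anticipate is purely one of bookkeeping around orientation, rather than depth. One must verify that three orientations agree: the convention in Equation \ref{def:new-r} (greater element in the first cell index), the orientation in which the incremental algorithm consumes $r_{jl}$ on Line $7$ (the new, hence greater, element lies in $C_j$), and the orientation in which the fixed order assigns the $\leq$-atoms of a pair $a<b$. Getting these consistent — and keeping $T_i[\Mat{k}]$ as an explicit assignment sum rather than forcing it into the symmetric closed form — is the crux; everything else is a routine adaptation of the \fotwo{} argument.
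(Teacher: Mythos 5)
Your proposal is correct and takes essentially the same route as the paper: an induction on the domain size that exploits the fact that the newly inserted element is the maximum of the fixed order, so the asymmetric $r_{jl}$ of Equation \ref{def:new-r} (new element's cell in the first index) is exactly the right per-pair factor for the weight update on Line 7, with the complexity analysis carried over verbatim from Theorem \ref{th:algo-fo2}. The only difference is presentational: the paper keeps the inductive invariant semantic (``$T_i[\Mat{k}]$ stores the weighted count of models on the ordered domain of size $i$ with cell counts $\Mat{k}$'') and never writes a closed form, whereas you make the invariant explicit as a sum over cell assignments $\sigma$ --- a sharper formalization that also correctly diagnoses why the symmetric closed form of Lemma \ref{l:induction} is unavailable once $r_{ij}\neq r_{ji}$.
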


\begin{proof}
Let us prove the claim by induction on size of the domain.

The base step is analogical to the one in proof of Lemma \ref{l:induction}.
More generally speaking, for a domain of a constant size $K$ ($K=1$ in Algorithm \ref{algo:dp}), we may simply ground the problem and compute its \wmc{} without any lifting.
Since $K$ is a constant with respect to $n$, we won't exceed the polynomial running time.

The inductive step differs from the one for Lemma \ref{l:induction}, but still builds on the same intuition.
Now, assume that our algorithm computes $\wfomc$ with linear order for a domain of size $i$, where the result is stored as the table entries $T_i[\Mat{k}]$ for all p-vectors $\Mat{k}$ such that $|\Mat{k}|=i$ (the final result would be obtained by summing those entries together).
Consider processing of the element $(i+1)$.
For a particular cell $C_j$ and a p-vector $\Mat{k}$, adding the new element will again extend the existing models with new atoms.
First, atoms corresponding to the subformula $\psi_j(i+1)$ will be added, hence the old weight must be multiplied by $w_j$.
Second, atoms corresponding to the subformulas $\psi_{jk}(i+1,i')$ for each cell $C_k$ and each processed element $i' (1 \leq i' < i)$.
However, only possible worlds satisfying $i' < i+1$ on top of that, will be models of the input sentence with respect to the fixed domain ordering.
That is precisely captured by $r_{ij}$ from Equation \ref{def:new-r}.
Other possible worlds will be assigned zero weight.
Hence, $$W_{new} = W_{old} \cdot w_j \cdot \prod_{l=1}^pr_{jl}^{(\Mat{k})_l}.$$
There are more possible p-vectors $\Mat{u}$ and cells $C_m$ such that $\Mat{u} + \vecdelta_m = \Mat{k} + \vecdelta_j = \Mat{k}_{new}$.
Those all correspond to different, mutually independent models whose weights can be added together.
Since we are processing all possible p-vectors, those also correspond to the only existing models.

Therefore, at the end of the final iteration, we will have summed up weights of all existing models of size $n$.
And since we only substituted one value in the original Algorithm \ref{algo:dp}, the computation still runs in time polynomial in the domain size.
\end{proof}

\begin{theorem}
The language of \fotwo{} extended by a linear order axiom is domain-liftable.
\end{theorem}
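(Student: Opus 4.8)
The plan is to reduce the general case to the universally quantified case already settled by Theorem~\ref{th:algo-fo2+lo}, and then to pass from a single ordering to the full domain via Corollary~\ref{cor:n-factorial}. Recall that a language is domain-liftable exactly when \wfomc{} of any of its sentences is computable in time polynomial in the domain size $n$. Hence it suffices to exhibit, for an arbitrary sentence $\phi = \psi \wedge Linear(\leq)$ in which $\psi$ is an arbitrary (not necessarily universally quantified) \fotwo{} sentence, a procedure computing $\wfomc(\phi, n, w, \overline{w})$ in time polynomial in $n$.

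First I would eliminate the existential quantifiers from $\psi$. Applying the \wfomc{} skolemization procedure \citep{Lifting/Skolemization} to $\psi$ produces a universally quantified \fotwo{} sentence $\psi'$ over an extended vocabulary (introducing only fresh unary Skolem predicates) together with adjusted weightings $(w', \overline{w}')$ such that $\wfomc(\psi, n, w, \overline{w}) = \wfomc(\psi', n, w', \overline{w}')$ for every $n$; putting $\psi'$ into prenex normal form is immediate since it is already universal. The point I would then have to verify is that this reduction survives the linear order axiom, i.e.\ that $\wfomc(\psi \wedge Linear(\leq), n, w, \overline{w}) = \wfomc(\psi' \wedge Linear(\leq), n, w', \overline{w}')$. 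This follows from the local guarantee underlying skolemization: for each possible world $\omega$ over the original vocabulary, the weighted sum over all extensions of $\omega$ to the Skolem predicates equals $\prod_{l\in\omega} w(\mathsf{pred}(l)) \prod_{l \in \hb \setminus \omega} \overline{w}(\mathsf{pred}(l))$ when $\omega \models \psi$ and vanishes otherwise. Because $Linear(\leq)$ depends only on $\omega[{\leq}]$ and $\leq$ is an original (non-Skolem) predicate, the linear order constraint factors out of that inner sum, so conditioning on it commutes with skolemization.

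Next I would feed $\phi' = \psi' \wedge Linear(\leq)$ to \iwfomc{} with the redefined $r_{ij}$ of Equation~\ref{def:new-r}. By Theorem~\ref{th:algo-fo2+lo} this computes $\wfomc$ of $\phi'$ on a single fixed ordered domain $\Set{1 \leq 2 \leq \cdots \leq n}$ in time polynomial in $n$. Corollary~\ref{cor:n-factorial}, justified by the symmetry established in Theorem~\ref{th:n-factorial}, then yields the \wfomc{} over the unordered domain by multiplying this single-ordering count by $n!$. Since $n!$ is obtained with $\BigO{n}$ multiplications of integers of $\BigO{n\log n}$ bits, the final step is itself polynomial, and composing the skolemization (a preprocessing step independent of $n$), the dynamic program, and the factorial multiplication keeps the entire computation polynomial in $n$. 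This establishes that \wfomc{} of $\phi$ is polynomial-time computable, hence that \fotwo{} with a linear order axiom is domain-liftable.

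I expect the main obstacle to be precisely the commutation claim in the second step: confirming that the weight-preserving property of skolemization is retained once we additionally impose $Linear(\leq)$. Everything else is an assembly of Theorem~\ref{th:algo-fo2+lo} and Corollary~\ref{cor:n-factorial}, together with the observation that skolemization stays within the two-variable fragment and introduces no predicate that interacts with the special treatment of $\leq$.
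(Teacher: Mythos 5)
Your proposal is correct and follows essentially the same route as the paper's own proof: skolemize $\psi$ into a universally quantified \fotwo{} sentence in prenex normal form, invoke Theorem~\ref{th:algo-fo2+lo} to compute \wfomc{} on one fixed ordering in polynomial time, and multiply by $n!$ via Corollary~\ref{cor:n-factorial}. Your explicit check that the weight-preserving guarantee of skolemization commutes with conditioning on $Linear(\leq)$ (because $\leq$ is a non-Skolem predicate and the axiom depends only on $\omega[{\leq}]$) is a detail the paper leaves implicit, so it strengthens rather than alters the argument.
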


\begin{proof}
For an input sentence $\phi = \psi\wedge Linear(\leq)$, where $\psi$ is an \fotwo{} sentence, start with converting $\psi$ to a prenex normal form with each predicate having arity at most $2$ \citep{Logic/FO2-decidable}.
Then apply the skolemization for \wfomc{} \citep{Lifting/Skolemization} to obtain a sentence of the form $\phi = \forall x\forall y:\psi(x,y)\wedge Linear(\leq)$, where $\psi$ is a quantifier-free formula.

By Theorem \ref{th:algo-fo2+lo}, we know that Algorithm \ref{algo:dp} computes $\wfomc(\phi, n, w, \overline{w})$ for one fixed ordering of the domain in time polynomial with respect to the domain size.
Once we have that value, we may multiply it by $n!$ to obtain the overall \wfomc, as is stated in Corollary \ref{cor:n-factorial}.
The entire computation thus runs in time polynomial in the domain size.
\end{proof}

\subsubsection*{A Worked Example of IncrementalWFOMC}
Let us now use another example of splitting a sequence to demonstrate the work of Algorithm \ref{algo:dp}.
Consider the sentence $\phi = \forall x \forall y: \psi(x, y) \wedge Linear(\leq)$, 
where $\psi$ is the conjunction of
\begin{align*}
    &\neg H(x) \vee \neg T(x),\\
    &H(y) \wedge (x\leq y) \Rightarrow H(x),\\
    &T(x) \wedge (x\leq y) \Rightarrow T(y).
\end{align*}

This time, we model a three-way split of a sequence, differentiating its head, tail and middle.
We have already seen the third formula, which defines a property of the sequence tail.
The second formula does the same for the head.
We also require that for each element, at least one of $H/1, T/1$ is set to false.
If both were set to true, then one element should be part of both the head and the tail, which is obviously something, we do not want.
If they are both set to false, then the element is part of the sequence middle.

Our goal is to compute $\wfomc(\phi, n, w, \overline{w})$, where $(w, \overline{w})$ are some weight functions.
For more clarity in the computations below, we leave the weights as parameters (except for the $\leq$ predicate, whose weights are fixed to one).
We will substitute concrete numbers at the end of our example.

First, we construct valid cells of $\psi$.
There are $3$ in total:
\begin{align*}
    C_1(x) &= H(x) \wedge \neg T(x) \wedge (x\leq x) \\
    C_2(x) &= \neg H(x) \wedge T(x) \wedge (x\leq x) \\
    C_3(x) &= \neg H(x) \wedge \neg T(x) \wedge (x\leq x)
\end{align*}
Having valid cells, we need to compute the values $r_{ij}$ and $w_k$.
Since we left the input weight functions as parameters, those cannot be specified numerically.
Instead, we use the following symbols:
\begin{equation*}
\begin{aligned}[c]
w = \begin{pmatrix}
w_1 \\
w_2 \\
w_3
\end{pmatrix}
\end{aligned}
\qquad
\begin{aligned}[c]
R = \begin{pmatrix}
r_{11} & r_{12} & r_{13} \\
r_{21} & r_{22} & r_{23} \\
r_{31} & r_{32} & r_{33}
\end{pmatrix}
\end{aligned}
\end{equation*}

Finally, we can start with the pseudocode.
Following the loop on Lines 1--3, we obtain the table $T_1$ as follows:
\begin{align*}
    T_1[(1, 0, 0)] &= w_1\\
    T_1[(0, 1, 0)] &= w_2\\
    T_1[(0, 0, 1)] &= w_3
\end{align*}

For the main loop on Lines 4--12, we have $i = [2, 3]$ and $j = [1, 2, 3]$.
\begin{itemize}
    \item Set $i = 2$.
    
    \begin{itemize}
        \item Set $j = 1$. Now we iterate over entries in $T_1$.
    
    First, we have $\Mat{k}_{old} = (1,0,0)$ and $W_{old}=w_1$.

We compute the new weight as $$W_{new} \leftarrow W_{old} \cdot w_1 \cdot r_{11}^1 \cdot r_{12}^0 \cdot r_{13}^0 = w_1^2r_{11}.$$

The new p-vector will be $\Mat{k}_{new} \leftarrow (2, 0, 0)$.

The old value $T_2[(2, 0, 0)] = 0$.

Hence, we will set
$$T_2[(2, 0, 0)] \leftarrow 0 + w_1^2r_{11}.$$

Second, we have $\Mat{k}_{old} = (0,1,0)$ and $W_{old}=w_2$.

That will lead to
$$T_2[(1, 1, 0)] \leftarrow 0 + w_1w_2r_{12}.$$

Third, $\Mat{k}_{old} = (0,0,1)$ and $W_{old}=w_3$.
Now, we perform an update
$$T_2[(1, 0, 1)] \leftarrow 0 + w_1w_3r_{13}.$$

    \item Set $j=2$. Again, iterate over entries in $T_1$.
    
    First, we have $\Mat{k}_{old} = (1,0,0)$ and $W_{old}=w_1$.
    
    We compute the new weight as $$W_{new} \leftarrow W_{old} \cdot w_2 \cdot r_{21}^1 \cdot r_{22}^0 \cdot r_{23}^0 = w_1w_2r_{21}.$$
    
    The new p-vector $\Mat{k}_{new} \leftarrow (1, 1, 0)$ already has non-zero value set in $T_2$, i.e., $$T_2[(1, 1, 0)] = w_1w_2r_{12}.$$
    
    Hence, we will now assign
    $$T_2[(1, 1, 0)] \leftarrow w_1w_2r_{12} + w_1w_2r_{21},$$
    which we will factor into
     $$T_2[(1, 1, 0)] = w_1w_2(r_{12}+r_{21}).$$
     
     We proceed analogically for $\Mat{k}_{old} = (0,1,0), W_{old}=w_2,$, leading to
     $$T_2[(0, 2, 0)] \leftarrow 0 + w_2^2r_{22},$$
     and for $\Mat{k}_{old} = (0,0,1), W_{old}=w_3.$, leading to
     $$T_2[(0, 1, 1)] \leftarrow 0 + w_2w_3r_{23}.$$
     
     \item After repeating the steps for $j=3$, we arrive at the complete table $T_2$ with entries:
     \begin{align*}
        T_2[(2, 0, 0)] &= w_1^2 r_{11}\\
        T_2[(1, 1, 0)] &= w_1 w_2 (r_{12} + r_{21})\\
        T_2[(1, 0, 1)] &= w_1 w_3 (r_{13} + r_{31})\\
        T_2[(0, 2, 0)] &= w_2^2 r_{22}\\
        T_2[(0, 1, 1)] &= w_2 w_3 (r_{23} + r_{32})\\
        T_2[(0, 0, 2)] &= w_3^2 r_{33}
     \end{align*}
    \end{itemize}
    
    \item When performing the computation for $i=3$, we now iterate over entries in $T_2$.
    Hence, for each $j$, there will now be six p-vector keys and their respective values to process.
    
    Eventually, we arrive at $T_3$ such that
    \begin{align*}
        T_3[(3, 0, 0)] &= w_1^3 r_{11}^3\\
        T_3[(2, 1, 0)] &= w_1^2 w_2 r_{11} [r_{12} (r_{12} + r_{21}) + r_{21}^2]\\
        T_3[(2, 0, 1)] &= w_1^2 w_3 r_{11} [r_{13} (r_{13} + r_{31}) + r_{31}^2]\\
        T_3[(1, 2, 0)] &= w_1 w_2^2 r_{22} [r_{21} (r_{21} + r_{12}) + r_{12}^2]\\
        T_3[(1, 1, 1)] &= w_1 w_2 w_3 [r_{12} r_{13} (r_{23} +r_{32}) + r_{21} r_{23} (r_{13} + r_{31}) + r_{31} r_{32} (r_{12} + r_{21})]\\
        T_3[(1, 0, 2)] &= w_1 w_3^2 r_{33} [r_{31} (r_{31} + r_{13}) + r_{13}^2]\\
        T_3[(0, 3, 0)] &=w_2^3 r_{22}^3\\
        T_3[(0, 2, 1)] &= w_2^2 w_3 r_{22} [r_{23} (r_{23} + r_{32}) + r_{32}^2]\\
        T_3[(0, 1, 2)] &= w_2 w_3^2 r_{33} [r_{32} (r_{32} + r_{23}) + r_{23}^2]\\
        T_3[(0, 0, 3)] &=w_3^3 r_{33}^3\\
     \end{align*}
\end{itemize}

Per Line 13, the final result is obtained by summing all the values in $T_3$ that are written above.

As is stated in Theorem \ref{th:algo-fo2+lo}, the obtained value is \wfomc{} for one particular ordering of the domain (specifically, the ordering $1 \leq 2 \leq 3$).
Since the result will be the same for any ordering of the domain, multiplying the value by $n!=6$ will produce the final \wfomc{} value.

Let us now check the obtained result by comparing it to a purely combinatorial solution of the problem.
To simplify matters a little, we assume to be working only with the particular ordering $1 \leq 2 \leq 3$, which allows us to disregard the multiplying by $n!$.

To find the number of three-way sequence splits, we set all weights to one.
For unitary weights, we obtain
\begin{equation*}
\begin{aligned}[c]
\begin{pmatrix}
w_1 \\
w_2 \\
w_3
\end{pmatrix}
=
\begin{pmatrix}
1 \\
1 \\
1
\end{pmatrix},
\end{aligned}
\qquad
\begin{aligned}[c]
\begin{pmatrix}
r_{11} & r_{12} & r_{13} \\
r_{21} & r_{22} & r_{23} \\
r_{31} & r_{32} & r_{33}
\end{pmatrix}
=
\begin{pmatrix}
1 & 0 & 0 \\
1 & 1 & 1 \\
1 & 0 & 1 \\
\end{pmatrix}.
\end{aligned}
\end{equation*}
Plugging those values into $T_3$ and summing produces
$$\sum_{\Mat{k}\in\Nat^3:|\Mat{k}|=3}T_3[\Mat{k}] =10.$$

The combinatorial solution may be found, e.g., by using the popular \textit{stars and bars} method:
\begin{center}
    \begin{tikzpicture}[scale=.7]
  \def\total{3}
  \xdef\y{0}
  \foreach \i in {0,...,\total}{
    \pgfmathsetmacro{\maxj}{int(\total-\i)}
    \foreach \j in {0,...,\maxj}{
      \pgfmathparse{int(\y+1)}\xdef\y{\pgfmathresult}
      \foreach \k in {1,...,\total}
        \node at (\k,\y){$\star$};
      \node at (.45+\j+\i,\y) {$\big|$};
      \node at (.55+\i,\y) {$\big|$};
    }
  }
\end{tikzpicture}
\end{center}
As we can see, there are indeed $10$ ways to split a particular sequence in this way.

\subsection{Domain-Liftability of \ctwo{} with Linear Order}
\wfomc{} in \ctwo{} may be reduced to \wfomc{} in \fotwo{} under cardinality constraints.
\wfomc{} under cardinality constraints may then be solved by repeated calls to a \wfomc{} oracle.
As there will only be a polynomial number of such calls in the domain size, it follows that \fotwo{} with cardinality constraints and also \ctwo{} are domain-liftable
\citep{Lifting/C2-domain-liftable}.

Since the \ctwo{} domain-liftability proof only relies on a domain-lifted \wfomc{} oracle, we may use our new algorithm for computing \wfomc{} with linear order as that oracle, leading to our final result.

\begin{theorem}
\label{th:c2+lo}
The language of \ctwo{} extended by a linear order axiom is domain-liftable.
\end{theorem}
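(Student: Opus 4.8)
The plan is to reuse the reduction of \wfomc{} in \ctwo{} to \wfomc{} in \fotwo{} under cardinality constraints \citep{Lifting/C2-domain-liftable} essentially verbatim, replacing only the underlying \wfomc{} oracle by the linear-order-aware procedure established in Theorem \ref{th:algo-fo2+lo} together with Corollary \ref{cor:n-factorial}. Concretely, given an input $\phi = \psi \wedge Linear(\leq)$ with $\psi$ a \ctwo{} sentence, I would first apply the known transformation that eliminates the counting quantifiers from $\psi$, producing a \wfomc{}-equivalent (under the appropriate bookkeeping weights) \fotwo{} sentence $\psi'$ together with a collection of cardinality constraints on freshly introduced auxiliary predicates. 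Since the linear order predicate $\leq$ is never the subject of a counting quantifier in $Linear(\leq)$, this step leaves $\leq$ and its intended interpretation untouched, and the output is $\phi' = \psi' \wedge Linear(\leq)$ under cardinality constraints.

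Next I would discharge the cardinality constraints by the standard polynomial-interpolation argument. For a constraint $(|P| \bowtie k)$ on an auxiliary predicate $P$, one replaces the weight of $P$ by a symbolic indeterminate and observes that \wfomc{} becomes a polynomial in that indeterminate whose degree is bounded by the number of $P$-atoms, which is polynomial in $n$. Evaluating the polynomial at polynomially many points and interpolating recovers the coefficient associated with each admissible cardinality, and hence the constrained count. Each evaluation is an ordinary (unconstrained) \wfomc{} call on $\phi' = \psi' \wedge Linear(\leq)$ with concrete weights that differ only in the weight assigned to $P$; because $\leq$ always carries fixed weight one, the linear order axiom is orthogonal to this weight manipulation and the polynomial structure in the indeterminate is preserved.

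Each such oracle call is exactly a \wfomc{} computation for a universally quantified \fotwo{} sentence with a linear order axiom, which by Theorem \ref{th:algo-fo2+lo} (computing the count for one fixed ordering) together with the multiplication by $n!$ from Corollary \ref{cor:n-factorial} runs in time polynomial in $n$. Since the number of interpolation points, and thus the number of oracle calls, is polynomial in $n$, and each call is polynomial, the whole procedure runs in time polynomial in the domain size, establishing domain-liftability of \ctwo{} with a linear order axiom.

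I expect the main obstacle to be bookkeeping of the reduction rather than any new combinatorial insight: one must verify carefully that the counting-quantifier elimination never produces a constraint mentioning $\leq$ in a way that would require a third variable, and that all cardinality constraints it introduces live on the auxiliary predicates (disjoint from $\leq$), so that the interpolation argument and the linear-order oracle compose cleanly. Once this separation between the counting machinery and the linear order machinery is confirmed, the result follows by treating the domain-liftability proof of \ctwo{} as oracle-relative and substituting the new oracle.
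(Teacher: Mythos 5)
Your proposal is correct and takes essentially the same approach as the paper: the paper likewise treats the domain-liftability proof of \ctwo{} \citep{Lifting/C2-domain-liftable} as oracle-relative, substitutes the linear-order-aware oracle of Theorem \ref{th:algo-fo2+lo} together with the $n!$ factor from Corollary \ref{cor:n-factorial}, and relies on the polynomially many oracle calls of the cardinality-constraint reduction, omitting the details precisely because they restate the known \ctwo{} proof almost word for word. Your explicit check that the counting-quantifier elimination only constrains fresh auxiliary predicates and leaves $\leq$ untouched is sound bookkeeping that the paper leaves implicit.
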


\noindent We omit the proof as it would consist of almost word by word restating of the already available proof on domain-liftability of \ctwo{} \citep{Lifting/C2-domain-liftable} with only cosmetic changes.

\subsection{Predecessor Relations}
Having enforced a domain ordering using the linear order axiom, we may define more complicated relations.
Once we have ordered the domain, a natural question to ask for a constant $A\in \Delta$ is: \textit{What element is the (immediate) predecessor of $A$}?
That question can even be further generalized to: \textit{What element is the k-th predecessor of $A$}?
In the subsequent paragraphs, we present possible encodings of the predecessor and the predecessor of predecessor relations for WFOMC.

\subsubsection*{Predecessor Relation}
Denote $Pred(x, y)$ the relation that $x$ is the (immediate) predecessor of $y$ with respect to a linear ordering of the domain enforced by the predicate $\leq$.
To properly encode $Pred/2$, we make use of an auxiliary relation $Perm/2$, which defines a specific permutation of the domain elements.

We claim that the predecessor relation can be encoded using the following theory:
\begin{align}
    \Psi_{Pred} = \{ &\forall x: \neg Perm(x, x),\label{eq:pred:1}\\
    &\forall x \exists^{=1}y: Perm(x, y),\label{eq:pred:2}\\
    &\forall y \exists^{=1}x: Perm(x, y),\label{eq:pred:3}\\
    &\forall x \forall y: Pred(x, y) \Rightarrow Perm(x, y),\label{eq:pred:4}\\
    &\forall x \forall y: Pred(x, y) \Rightarrow (x \leq y),\label{eq:pred:5}\\
    &|Pred| = n - 1\label{eq:pred:6} \}
\end{align}

Let us investigate the correctness of the encoding.
Consider the domain elements to be nodes of a graph and a domain ordering to be the topological ordering.
Relations will then add edges to the graph.
We provide visualisations for a $5$-element domain.

We start without any relations:
\begin{center}
\begin{tikzpicture}
	\node (1) at (-2,0) [shape=circle,draw] {1};
	\node (2) at (-1,0) [shape=circle,draw] {2};
 	\node (3) at ( 0,0) [shape=circle,draw] {3};
	\node (4) at ( 1,0) [shape=circle,draw] {4};
	\node (5) at ( 2,0) [shape=circle,draw] {5};
\end{tikzpicture}
\end{center}

It is obvious that we would like to achieve the situation when our graph looks like
\begin{center}
\begin{tikzpicture}
	\node (1) at (-2,0) [shape=circle,draw] {1};
	\node (2) at (-1,0) [shape=circle,draw] {2}
		edge [<-] (1);
 	\node (3) at ( 0,0) [shape=circle,draw] {3}
		edge [<-] (2);
	\node (4) at ( 1,0) [shape=circle,draw] {4}
		edge [<-] (3);
	\node (5) at ( 2,0) [shape=circle,draw] {5}
		edge [<-] (4)
		edge [->, bend left=30,dashed] (1);
\end{tikzpicture}
\end{center}
where the edges drawn by a full line correspond to the $Pred$ relation and all of the edges correspond to the $Perm$ relation.
Let us now investigate the need for each of the formulas to guarantee such graph structure.

Sentence \ref{eq:pred:1} prohibits loops for $Perm$ and sentences \ref{eq:pred:2} and \ref{eq:pred:3} require $Perm$ to be a bijection.
Hence, $Perm$ must be a permutation without fixed points of the domain.\footnote{Permutations without fixed points are also known as {\em derangements}.}
Nevertheless, more is needed since various (undesired) structures satisfy that requirement.
For instance:
\begin{center}
\begin{tikzpicture}
	\node (1) at (-2,0) [shape=circle,draw] {1};
	\node (2) at (-1,0) [shape=circle,draw] {2}
		edge [<-, bend right=40] (1)
		edge [->, bend left=40] (1);
 	\node (3) at ( 0,0) [shape=circle,draw] {3}
		edge [<-, bend right=40] (5);
	\node (4) at ( 1,0) [shape=circle,draw] {4}
		edge [<-, bend right=40] (3);
	\node (5) at ( 2,0) [shape=circle,draw] {5}
		edge [<-, bend right=40] (4);
\end{tikzpicture}
\end{center}

Sentences \ref{eq:pred:5}, \ref{eq:pred:6} require that the \textit{edges} of $Pred$ never go \textit{right to left} and that there are exactly $n-1$ of them.
A graph such as
\begin{center}
\begin{tikzpicture}
	\node (1) at (-2,0) [shape=circle,draw] {1};
	\node (2) at (-1,0) [shape=circle,draw] {2};
 	\node (3) at ( 0,0) [shape=circle,draw] {3}
		edge [<-, bend right=45] (1)
		edge [<-] (2)
		edge [->, loop above] (3);
	\node (4) at ( 1,0) [shape=circle,draw] {4};
	\node (5) at ( 2,0) [shape=circle,draw] {5}
		edge [<-] (4);
\end{tikzpicture}
\end{center}
satisfies such constraints.

Finally, sentence \ref{eq:pred:4} introduces a relationship between $Pred$ and $Perm$.
Whenever $Pred(A, B)$ is satisfied, so must be $Perm(A, B)$.
As an immediate consequent, there must be $n-1$ edges of $Perm$ that go \textit{left to right} (they must go right since loops are prohibited).
Moreover, $Perm$ must be a bijection so all of the $n-1$ edges must have different starting node and end node.
There is only one way, how to connect the nodes now:
\begin{center}
\begin{tikzpicture}
	\node (1) at (-2,0) [shape=circle,draw] {1};
	\node (2) at (-1,0) [shape=circle,draw] {2}
		edge [<-] (1);
 	\node (3) at ( 0,0) [shape=circle,draw] {3}
		edge [<-] (2);
	\node (4) at ( 1,0) [shape=circle,draw] {4}
		edge [<-] (3);
	\node (5) at ( 2,0) [shape=circle,draw] {5}
		edge [<-] (4);
\end{tikzpicture}
\end{center}

The relation $Perm$ still requires one more edge to be added.
It must be $Perm(n,1)$, since $n$ is the only element for which, in terms of (bijective) functions, we still do not have an image defined, and $1$ is the only element which is not yet an image of any other element.
Thus, we arrive at our desired graph:
\begin{center}
\begin{tikzpicture}
	\node (1) at (-2,0) [shape=circle,draw] {1};
	\node (2) at (-1,0) [shape=circle,draw] {2}
		edge [<-] (1);
 	\node (3) at ( 0,0) [shape=circle,draw] {3}
		edge [<-] (2);
	\node (4) at ( 1,0) [shape=circle,draw] {4}
		edge [<-] (3);
	\node (5) at ( 2,0) [shape=circle,draw] {5}
		edge [<-] (4)
		edge [->, bend left=30, dashed] (1);
\end{tikzpicture}
\end{center}

\begin{lemma}
\label{lm:pred}
The first-order theory $\Psi_{Pred}$ along with a linear order enforcing predicate $\leq$ correctly defines the immediate predecessor relation for any domain size $n\geq 2$.
Moreover, the theory has exactly $n!$ models.
\end{lemma}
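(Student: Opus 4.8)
The plan is to split the statement into two parts: (i) that in every model the $Pred$-atoms encode precisely the immediate-predecessor relation of the total order given by $\leq$, and (ii) that the model count is exactly $n!$. I would prove (i) directly for an arbitrary linear order rather than relying on a symmetry reduction. So fix a model: by $Linear(\leq)$ the predicate $\leq$ is a total order on $\Delta$, and I write its elements in increasing order as $a_1 < a_2 < \cdots < a_n$.

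First I would pin down the structure of $Perm$. Sentence \ref{eq:pred:1} forbids fixed points, and sentences \ref{eq:pred:2}--\ref{eq:pred:3} force each element to have exactly one outgoing and exactly one incoming $Perm$-atom, so $Perm$ is a derangement and contains exactly $n$ atoms. Next I would constrain $Pred$: by \ref{eq:pred:4} every $Pred$-atom is a $Perm$-atom, and \ref{eq:pred:4} together with \ref{eq:pred:1} excludes $Pred(A,A)$ (a self-loop would force the forbidden $Perm(A,A)$); combined with \ref{eq:pred:5} this shows that every $Pred$-atom $Pred(A,B)$ satisfies $A < B$ strictly. Since $Pred \subseteq Perm$ and $Perm$ is a bijection, the $n-1$ atoms demanded by \ref{eq:pred:6} have pairwise distinct sources and pairwise distinct targets. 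The maximum $a_n$ cannot be a source and the minimum $a_1$ cannot be a target, so the sources are exactly $\{a_1, \dots, a_{n-1}\}$ and the targets exactly $\{a_2, \dots, a_n\}$.

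The crux is to show the resulting strictly increasing bijection is forced to be $a_i \mapsto a_{i+1}$. I would argue by induction on $i$: the target $a_2$ can only be reached from a strictly $\leq$-smaller source, hence from $a_1$, forcing $Pred(a_1,a_2)$; and assuming $Pred(a_1,a_2),\dots,Pred(a_{i-1},a_i)$ have been fixed, the target $a_{i+1}$ must come from an as-yet-unused source lying $\leq$-below it, which leaves only $a_i$, forcing $Pred(a_i,a_{i+1})$. Thus $Pred$ is uniquely the immediate-predecessor relation of $\leq$, establishing (i). The remaining $Perm$-atom is then determined as well: once the $n-1$ $Pred$-atoms are fixed, the only element still missing an outgoing $Perm$-atom is $a_n$ and the only one missing an incoming atom is $a_1$, so by bijectivity the last $Perm$-atom must be $Perm(a_n,a_1)$, which violates none of the constraints (it is not a $Pred$-atom, so \ref{eq:pred:5} does not apply, and $a_n \neq a_1$). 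Hence each total order extends to exactly one model.

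For (ii), there are exactly $n!$ total orders on a domain of size $n$, and by the above each of them yields a unique consistent choice of $Perm$ and $Pred$; multiplying gives $n!$ models in total. The step I expect to demand the most care is the inductive uniqueness argument for $Pred$, where the bookkeeping of distinct sources and targets together with the facts that $a_1$ admits no predecessor and $a_n$ no successor must be kept airtight; by contrast, the derangement structure of $Perm$ and the forced final atom $Perm(a_n,a_1)$ are immediate consequences of the cardinality and bijectivity constraints.
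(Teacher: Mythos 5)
Your proof is correct, and its combinatorial core coincides with the paper's informal reasoning: sentences \ref{eq:pred:1}--\ref{eq:pred:3} force $Perm$ to be a derangement, sentences \ref{eq:pred:4}--\ref{eq:pred:6} force the $n-1$ $Pred$-atoms to be strictly increasing with pairwise distinct sources and distinct targets, this pins $Pred$ down to the successor chain, and bijectivity then forces $Perm(a_n,a_1)$ as the unique closing atom. You depart from the paper in two ways, both gains in rigor. First, the paper establishes uniqueness only for the fixed ordering $1\leq 2\leq\cdots\leq n$ and transfers the conclusion to arbitrary orderings (and derives the $n!$ count) by appealing to the symmetry of Theorem \ref{th:n-factorial}; you argue directly for an arbitrary linear order $a_1 < a_2 < \cdots < a_n$, so the symmetry step is unnecessary and the count of $n!$ models falls out immediately from ``one model per linear order times $n!$ linear orders.'' Second, where the paper simply asserts that ``there is only one way, how to connect the nodes now,'' you supply the missing argument: the induction showing that target $a_2$ admits only source $a_1$, and that $a_{i+1}$'s only unused source strictly below it is $a_i$ --- this is precisely the step the paper leaves informal, and your bookkeeping of distinct sources $\{a_1,\ldots,a_{n-1}\}$ and targets $\{a_2,\ldots,a_n\}$ is sound (you also correctly derive strictness of $Pred$-edges from \ref{eq:pred:1} via \ref{eq:pred:4}, which \ref{eq:pred:5} alone does not give). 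One small point to tighten on the existence side: your argument as written shows each order admits \emph{at most} one model, and you verify that the closing atom $Perm(a_n,a_1)$ violates nothing; to get \emph{exactly} one you should also record that the forced configuration satisfies all of \ref{eq:pred:1}--\ref{eq:pred:6}, which is immediate since $Perm$ is then the full cyclic permutation and the chain satisfies the $Pred$-constraints by construction (this is also where the hypothesis $n\geq 2$ enters, via $a_n\neq a_1$).
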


\begin{proof}
By the reasoning above, for any domain size $n \geq 2$ and the domain ordering $1 \leq 2\leq\ldots\leq n$, the theory $\Psi_{Pred}$ has exactly one model $\omega_{Pred}$ such that
\begin{align*}
    \omega_{Pred} = \bigcup_{i=1}^{n-1} &\Set{Perm(i, i+1), Pred(i, i+1)} \cup \Set{Perm(n,1)}.
\end{align*}
Hence, for every element $i\geq 2$, its predecessor is the element $i-1$.
The element $i=1$ has no predecessor.
That is the immediate predecessor relation.
It follows from Theorem \ref{th:n-factorial} that $\Psi_{Pred}$ defines the predecessor correctly for any domain ordering.

For any domain ordering, $\Psi_{Pred}$ has exactly one model.
Since there are $n!$ possible orderings, there are $n!$ models.
\end{proof}

\noindent
Since we are able to define the predecessor relation, we may extend the linear order axiom to also capture the predecessor property.

\begin{definition}
Let $\psi$ be a logical sentence possibly containing binary predicates $\leq$ and $Pred$.
A possible world $\omega$ is a model of $\phi = \psi \wedge Linear(\leq, Pred)$ if and only if $\omega$ is a model of $\psi \wedge Linear(\leq)$,
and the relation $\omega[{Pred}]$ forms the immediate predecessor relation w.r.t.\ the order $\leq$.%
\end{definition}

\begin{theorem}
$\wfomc{}(\psi \wedge Linear(\leq, Pred),n,w,\barw)$, where $\psi$ is an arbitrary \ctwo{} sentence, can be computed in time polynomial in $n$.
\end{theorem}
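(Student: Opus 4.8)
The plan is to eliminate the semantic constraint on $Pred$ by replacing it with the explicit first-order theory $\Psi_{Pred}$, and then to invoke the domain-liftability of $\ctwo{}$ with linear order (Theorem \ref{th:c2+lo}). By Lemma \ref{lm:pred}, for any fixed domain ordering the theory $\Psi_{Pred}$ together with $Linear(\leq)$ admits exactly one interpretation of $Pred$, namely the immediate predecessor relation, and it simultaneously forces the auxiliary relation $Perm$ to take a unique value (the predecessor edges together with the single wrap-around edge $Perm(n,1)$). Consequently, a possible world $\omega$ satisfies $\psi \wedge Linear(\leq, Pred)$ precisely when $\omega$, augmented by the uniquely determined $Perm$ atoms, satisfies $\psi \wedge \Psi_{Pred} \wedge Linear(\leq)$.

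First I would make this correspondence precise as a weight-preserving bijection between the two model sets. Let $(w', \barw')$ agree with $(w, \barw)$ on every predicate of $\psi$ (including $\leq$ and $Pred$) and set $w'(Perm) = \barw'(Perm) = 1$. Each model of $\psi \wedge Linear(\leq, Pred)$ extends in exactly one way to a model of $\psi \wedge \Psi_{Pred} \wedge Linear(\leq)$, and conversely every model of the latter restricts to a model of the former; since the $Perm$ atoms contribute a factor of one under $(w', \barw')$, corresponding models carry identical weight. Summing over all models then gives
\begin{equation*}
\wfomc(\psi \wedge Linear(\leq, Pred), n, w, \barw) = \wfomc\big((\psi \wedge \Psi_{Pred}) \wedge Linear(\leq), n, w', \barw'\big).
\end{equation*}

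Next I would observe that $\psi \wedge \Psi_{Pred}$ is itself a $\ctwo{}$ sentence under a single cardinality constraint: every sentence of $\Psi_{Pred}$ uses at most two variables and, beyond ordinary quantification, only the counting quantifier $\exists^{=1}$, while the requirement $|Pred| = n-1$ is a cardinality constraint, all of which lie within $\ctwo{}$ with cardinality constraints. Hence the right-hand side above is an instance of \wfomc{} of a $\ctwo{}$ sentence conjoined with $Linear(\leq)$, which by Theorem \ref{th:c2+lo} is domain-liftable; the cardinality constraint is absorbed by the same polynomially many oracle calls used in that reduction. This yields the claimed polynomial running time in $n$.

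The main obstacle is the bookkeeping of the auxiliary predicate $Perm$: one must verify carefully that introducing it neither creates nor destroys models and that its contribution to the weight is exactly one, so that the reduction is genuinely weight-preserving. This hinges on the uniqueness established in Lemma \ref{lm:pred} for a single ordering, combined with the symmetry over all $n!$ orderings provided by Theorem \ref{th:n-factorial}; once uniqueness is in hand, the remaining steps are routine applications of the earlier results.
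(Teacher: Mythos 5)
Your proposal is correct and follows essentially the same route as the paper: replace the semantic constraint $Linear(\leq, Pred)$ by the explicit theory $\Psi_{Pred}$ via Lemma \ref{lm:pred}, then invoke the domain-liftability of \ctwo{} with linear order (Theorem \ref{th:c2+lo}). Your explicit weight-preserving bijection, with $w'(Perm) = \barw'(Perm) = 1$ for the auxiliary predicate, merely spells out bookkeeping that the paper's terser proof leaves implicit, so no further comparison is needed.
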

\begin{proof}
By Lemma \ref{lm:pred}, we can express the predecessor relation using the theory $\Psi_{Pred}$.
Hence, the computation is equivalent to computing $\wfomc{}(\psi \wedge\Psi_{Pred} \wedge Linear(\leq),n,w,\barw)$.
Since $\psi \wedge\Psi_{Pred}$ is a \ctwo{} sentence, by Theorem \ref{th:c2+lo}, we are computing \wfomc{} over a domain-liftable language.
\end{proof}

\noindent
Before we can use $\Psi_{Pred}$ as a part of our algorithm's input, we need to further encode it using the language  of universally quantified \fotwo{} with cardinality constraints.
The counting quantifiers from sentences \ref{eq:pred:2} and \ref{eq:pred:3} may be reduced to ordinary existential quantifiers by adding a single cardinality constraint $(|Perm|=n)$ \citep{Lifting/C2-domain-liftable}.
Afterwards, the sentences need to be \textit{skolemized} \citep{Lifting/Skolemization}.
Overall, we end up with the theory
\begin{align*}
    \Psi_{Pred}' = \{ &\forall x: \neg Perm(x, x),\\
    &\forall x \forall y: \neg Perm(x, y) \vee S_1(x),\\
    &\forall x \forall y: \neg Perm(x, y) \vee S_2(x),\\
    &\forall x \forall y: Pred(x, y) \Rightarrow Perm(x, y),\\
    &\forall x \forall y: Pred(x, y) \Rightarrow (x \leq y),\\
    &|Perm| = n,\\
    &|Pred| = n - 1 \},
\end{align*}
where $S_1/1$ and $S_2/1$ are fresh (Skolem) predicates such that $w(S_1) = w(S_2) = 1$ and $\overline{w}(S_1) = \overline{w}(S_2) = -1$.

\subsubsection*{Predecessor of the Predecessor}
Once we have found the predecessor, we may seek predecessor of that predecessor.
Let us denote such relation $Pred2(x, y)$, i.e., $Pred2(x, y)$ is true if and only if there exists an element $z$ such that $Pred(x, z)$ and $Pred(z, y)$ are true (with respect to a linear order enforcing predicate $\leq$).

Following a similar reasoning as for definition of $Pred(x,y)$, we will start by defining a permutation of the domain elements.
Now, the permutation will consist of two cycles, one of length $\lfloor\frac{n}{2}\rfloor$ and the other of length $\lceil\frac{n}{2}\rceil$.

Denote the new relation $Perm2(x, y)$.
Let us start by saying that $Perm2$ should be a permutation without fix-points:
\begin{align}
    &\forall x: \neg Perm2(x, x)\label{eq:pred2:1}\\
    &\forall x \exists^{=1} y: Perm2(x, y)\label{eq:pred2:2}\\
    &\forall y \exists^{=1} x: Perm2(x, y)\label{eq:pred2:3}
\end{align}
Next, we need to track how many times we go \textit{right to left}.
There should be exactly two transitions like that.
Let us enforce that by
\begin{align}
    &\forall x \forall y: Inv(x, y) \Leftrightarrow ((y \leq x) \wedge Perm2(x, y)),\label{eq:pred2:4}\\
    &|Inv| = 2\label{eq:pred2:5}.
\end{align}
Obviously, that prohibits more than two cycles but there could still be just one, such as
\begin{center}
\begin{tikzpicture}
	\node (1) at (-2,0) [shape=circle,draw] {1};
	\node (2) at (-1,0) [shape=circle,draw] {2};
 	\node (3) at ( 0,0) [shape=circle,draw] {3}
		edge [<-, bend right=45] (1)
		edge [->, bend left=30] (2);
	\node (4) at ( 1,0) [shape=circle,draw] {4}
		edge [<-, bend right=45] (2);
	\node (5) at ( 2,0) [shape=circle,draw] {5}
	    edge [<-] (4);
	\node (6) at ( 3,0) [shape=circle,draw] {6}
		edge [<-] (5)
		edge [->, bend left=30] (1);
\end{tikzpicture}
\end{center}

We will prevent one cycle by differentiating odd and even nodes.
We can do that by coloring the nodes with two different colors such that neighboring nodes are colored differently (we will require the relation $Pred$ for that):
\begin{align}
    &\forall x : Red(x) \vee Blue(x)\label{eq:pred2:6}\\
    &\forall x : \neg Red(x) \vee \neg Blue(x)\label{eq:pred2:7}\\
    &\forall x \forall y : Red(x) \wedge Pred(x,y) \Rightarrow Blue(y)\label{eq:pred2:8}\\
    &\forall x \forall y : Blue(x) \wedge Pred(x,y) \Rightarrow Red(y)\label{eq:pred2:9}
\end{align}
When counting the models, we just need to keep in mind that there are two ways how to color the sequence (starting with red or with blue). Hence, the (weighted) model count needs to be divided by $2$ in the end.

Having labeled immediate neighbors by different colors, we can enforce that only the same-colored nodes are connected by $Perm2$:
\begin{align}
    &\forall x \forall y : Red(x) \wedge Perm2(x,y) \Rightarrow Red(y)\label{eq:pred2:10}\\
    &\forall x \forall y : Blue(x) \wedge Perm2(x,y) \Rightarrow Blue(y)\label{eq:pred2:11}
\end{align}

Finally, we can relate $Pred2$ and $Perm2$ same as we did in the case of the predecessor relation:
\begin{align}
    &\forall x \forall y : Pred2(x,y) \Rightarrow Perm2(x,y)\label{eq:pred2:12}\\
    &\forall x,y : Pred2(x,y) \Rightarrow (x \leq y)\label{eq:pred2:13}\\
    &|Pred2| = n-2\label{eq:pred2:14}
\end{align}
And we finally arrive at the desired situation:
\begin{center}
\begin{tikzpicture}
	\node (1) at (-2,0) [shape=circle,draw] {1};
	\node (2) at (-1,0) [shape=circle,draw] {2};
 	\node (3) at ( 0,0) [shape=circle,draw] {3}
		edge [<-, bend right=45] (1);
	\node (4) at ( 1,0) [shape=circle,draw] {4}
		edge [<-, bend right=45] (2);
	\node (5) at ( 2,0) [shape=circle,draw] {5}
	    edge [<-, bend right=45] (3)
	    edge [->, bend left=30, dashed] (1);
	\node (6) at ( 3,0) [shape=circle,draw] {6}
		edge [<-, bend right=45] (4)
		edge [->, bend left=30, dashed] (2);
\end{tikzpicture}
\end{center}

Let us concentrate the sentences \ref{eq:pred:1} through \ref{eq:pred2:14} into a first-order theory $\Psi_{Pred2}$.
\begin{lemma}
\label{lm:pred2}
The first-order theory $\Psi_{Pred2}$ along with a linear order enforcing predicate $\leq$ correctly defines the immediate predecessor of the immediate predecessor relation for any domain size $n\geq 4$.
Moreover, there are exactly $2n!$ models of the theory.
\end{lemma}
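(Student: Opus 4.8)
\section*{Proof proposal}

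The plan is to mirror the proof of Lemma \ref{lm:pred}: fix the canonical ordering $1 \leq 2 \leq \cdots \leq n$, argue that the only freedom left in a model is a single binary choice, exhibit the model uniquely determined by that choice, and then lift to all orderings via the $n!$ symmetry of Theorem \ref{th:n-factorial}. Since $\Psi_{Pred2}$ contains $\Psi_{Pred}$ (sentences \ref{eq:pred:1}--\ref{eq:pred:6}), Lemma \ref{lm:pred} already forces $Pred$ to be the immediate-predecessor relation, i.e.\ $Pred(i,i+1)$ for $i=1,\dots,n-1$ under the fixed ordering.

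First I would pin down the coloring. Sentences \ref{eq:pred2:6} and \ref{eq:pred2:7} make $Red/Blue$ a proper $2$-coloring, and \ref{eq:pred2:8}, \ref{eq:pred2:9} force the color to flip along every $Pred$-edge, so the coloring alternates with the parity of the position. The color of element $1$ may be chosen freely and then propagates deterministically, giving exactly two admissible colorings; each splits the domain into the odd- and even-indexed classes, of sizes $\lceil n/2\rceil$ and $\lfloor n/2\rfloor$. Next I would determine $Perm2$: sentences \ref{eq:pred2:1}--\ref{eq:pred2:3} make it a fixed-point-free bijection, while \ref{eq:pred2:10}, \ref{eq:pred2:11} force it to preserve color, so $Perm2$ restricts to a derangement of each color class. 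The crux is then a backward-edge count: by \ref{eq:pred2:4}, \ref{eq:pred2:5} there are exactly two pairs with $Perm2(x,y)$ and $y\leq x$. Every cycle of a permutation of a finite linearly ordered set contains at least one such backward edge (the image of the cycle's maximum is smaller than it), and each nonempty color class contributes at least one cycle; hence the two classes account for at least two backward edges, and equality forces each class to be a single cycle with exactly one backward edge. A cyclic permutation with a single descent is unique --- it is the increasing shift $a_1 \to a_2 \to \cdots \to a_m \to a_1$ on the sorted class --- so $Perm2$ is fully determined by the coloring.

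It remains to read off $Pred2$. The forward edges of $Perm2$ join consecutive same-color elements, i.e.\ elements exactly two apart, and counting gives $(\lceil n/2\rceil-1)+(\lfloor n/2\rfloor-1)=n-2$ of them. Sentences \ref{eq:pred2:12}, \ref{eq:pred2:13} restrict $Pred2$ to these forward edges, and \ref{eq:pred2:14} demands exactly $n-2$ of them, so $Pred2$ must equal the entire set of forward $Perm2$-edges; since ``$y$ is two steps ahead of $x$'' is precisely ``$x$ is the predecessor of the predecessor of $y$'', the relation is correctly defined. For the count, everything above is uniquely fixed once the ordering and the binary coloring are chosen, so there are exactly two models per ordering; by Theorem \ref{th:n-factorial} there are $n!$ orderings yielding pairwise distinct models, giving $2\,n!$ models overall.

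The main obstacle I anticipate is the backward-edge argument: making precise that $|Inv|=2$ together with the color-preservation of $Perm2$ forces exactly one cycle per class, and that such a one-descent cycle is the unique increasing shift. A related subtlety is the boundary case, which explains the hypothesis $n\geq 4$: a fixed-point-free permutation of a color class exists only when the class has at least two elements, i.e.\ $\lfloor n/2\rfloor \geq 2$; for $n\in\{2,3\}$ one color class is a singleton and the theory is unsatisfiable.
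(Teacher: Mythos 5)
Your proof is correct and takes essentially the same route as the paper's: fix the canonical ordering $1\leq 2\leq\cdots\leq n$, note that the alternating coloring forced through $Pred$ leaves a single binary choice, show that $Perm2$ and hence $Pred2$ are then uniquely determined, and lift to all orderings via Theorem \ref{th:n-factorial} to get $2\,n!$ models. If anything, you are more rigorous than the paper, whose proof just appeals to ``the reasoning above'' (the informal pictorial discussion): your backward-edge count --- every cycle contributes at least one inversion, so $|Inv|=2$ with color preservation forces exactly one cycle per color class, and a single-cycle permutation with one backward edge must be the increasing shift --- together with your explanation of why $n\geq 4$ is needed, makes precise exactly the steps the paper leaves implicit.
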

\begin{proof}
By the reasoning above, for any domain size $n\geq 4$ and the domain ordering $1 \leq 2\leq\ldots\leq n$, the theory has exactly two models, each being
\begin{align*}
    \omega = \bigcup_{i=1}^{n-2} &\Set{Perm2(i, i+2), Pred2(i, i+2)}\\
    &\cup \Set{Perm2(n-1,a), Perm2(n,b)} \cup \omega_{Pred} \cup \omega_{R} \cup \omega_{B},
\end{align*}
where $a=1, b=2$ if $n$ is even and $a=2, b=1$ if $n$ is odd.

The set $\omega_{Pred}$ is the same as specified in the Proof of Lemma \ref{lm:pred}.
Sets $\omega_R$ and $\omega_B$ determine the coloring and these are also the only parts of $\omega$ where the two models of $\Psi_{Pred2}$ differ.

One model contains atoms such that
\begin{align*}
    \omega_R = \{Red(1), Red(3), \ldots, Red(a)\},\\
    \omega_B = \{Blue(2), Blue(4), \ldots, Blue(b)\},
\end{align*}
where $a=n-1,b=n$ if $n$ is even and $a=n, b=n-1$ if $n$ is odd.

Analogously, the other model contains atoms such that
\begin{align*}
    \omega_R = \{Red(2), Red(4), \ldots, Red(a)\},\\
    \omega_B = \{Blue(1), Blue(3), \ldots, Blue(b)\},
\end{align*}
where $a=n,b=n-1$ if $n$ is even and $a=n-1, b=n$ if $n$ is odd.

Hence, every element $i\geq 3$ has the element $i - 2$ as the predecessor of its predecessor.
It follows from Theorem \ref{th:n-factorial} that $\Psi_{Pred2}$ defines the predecessor of the predecessor correctly for any domain ordering.

For any domain ordering, $\Psi_{Pred2}$ has exactly two models. Since there are $n!$ possible orderings, there are $2n!$ models in total.
\end{proof}

\noindent
Now, we can extend the linear order axiom even further in the same manner as above.
\begin{definition}
Let $\psi$ be a logical sentence possibly containing binary predicates $\leq$, $Pred$ and $Pred2$.
A possible world $\omega$ is a model of $\phi = \psi \wedge Linear(\leq, Pred, Pred2)$ if and only if
$\omega$ is a model of $\psi \wedge Linear(\leq, Pred)$, and the relation $\omega[Pred2]$ forms the immediate predecessor of the immediate predecessor relation w.r.t. the order $\leq$.
\end{definition}

\begin{theorem}
$\wfomc(\psi\wedge Linear(\leq, Pred, Pred2), n, w, \barw)$, where $\psi$ is an arbitrary \ctwo{} sentence, can be computed in time polynomial in $n$.
\end{theorem}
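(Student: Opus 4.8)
The plan is to mirror the proof of the analogous theorem for $Linear(\leq, Pred)$, replacing the appeal to Lemma~\ref{lm:pred} with Lemma~\ref{lm:pred2} and then correcting for the extra factor that the colouring introduces. By Lemma~\ref{lm:pred2}, the relation $Pred2$ is correctly axiomatised, for every domain ordering, by the first-order theory $\Psi_{Pred2}$ together with $Linear(\leq)$. The conjunction $\psi \wedge \Psi_{Pred2}$ uses only counting quantifiers $\exists^{=1}$ and cardinality constraints on top of the two-variable fragment, so it is a \ctwo{} sentence; thus computing $\wfomc$ of $\psi \wedge \Psi_{Pred2} \wedge Linear(\leq)$ falls under the domain-liftable language of Theorem~\ref{th:c2+lo}.

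The one genuine difference from the $Pred$ case is the auxiliary colouring. Lemma~\ref{lm:pred2} shows that, for each fixed ordering, a single correct $Pred2$ configuration corresponds to \emph{two} models of $\Psi_{Pred2}$, one for each way of assigning the colours $Red$/$Blue$ to the two cycles. I would therefore extend the input weightings $(w, \barw)$ to the fresh auxiliary predicates ($Perm$, $Perm2$, $Inv$, $Red$, $Blue$, together with the Skolem predicates introduced below) so that the colouring predicates are weight-neutral (weight $1$ both positively and negatively), while leaving the user-supplied weights on $Pred$ and $Pred2$ untouched. Since $\psi$ mentions none of the auxiliary predicates, the two colour-swapped models of any fixed $\leq$/$Pred$/$Pred2$ configuration receive identical total weight, and consequently
\begin{equation*}
\wfomc(\psi \wedge Linear(\leq, Pred, Pred2), n, w, \barw)
= \tfrac{1}{2}\,\wfomc(\psi \wedge \Psi_{Pred2} \wedge Linear(\leq), n, w', \barw'),
\end{equation*}
where $(w', \barw')$ extends $(w, \barw)$ to the auxiliary predicates as above.

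Before invoking the oracle I would convert $\Psi_{Pred2}$ to universally quantified \fotwo{} with cardinality constraints exactly as $\Psi_{Pred}$ was turned into $\Psi_{Pred}'$: the counting quantifiers in the $Perm2$ bijection axioms~(\ref{eq:pred2:2})--(\ref{eq:pred2:3}) reduce to ordinary existentials plus a cardinality constraint $(|Perm2| = n)$, and are then skolemised using fresh predicates of positive weight $1$ and negative weight $-1$. The right-hand side above is then computable in time polynomial in $n$ by Theorem~\ref{th:c2+lo}, and multiplying by the constant $\tfrac{1}{2}$ clearly preserves the polynomial bound.

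The step I expect to require the most care is justifying the clean factor-of-two correction. One must verify that \emph{every} model of $\Psi_{Pred2}$ for a fixed ordering arises in exactly such a colour-swapped pair of equal weight; this relies on the auxiliary predicates being fresh (so that $\psi$ is over a signature disjoint from $\Set{Red, Blue, Perm, Perm2, Inv}$) and weight-neutral, which is what makes the two colourings contribute identically rather than with distinct weights. Everything else is a direct transcription of the $Pred$ argument.
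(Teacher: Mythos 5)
Your proposal is correct and follows essentially the same route as the paper's own proof: reduce the task to computing $\wfomc(\psi\wedge\Psi_{Pred2}\wedge Linear(\leq), n, w, \barw)/2$ via Lemma~\ref{lm:pred2} and invoke Theorem~\ref{th:c2+lo} for polynomial-time computability. Your additional care about weight-neutral auxiliary predicates, the colour-swapped pairs justifying the factor of $2$, and the skolemization of $\Psi_{Pred2}$ merely makes explicit what the paper leaves implicit (the division by $2$ and the reduction to \fotwo{} with cardinality constraints are stated in the surrounding text rather than in the proof itself).
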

\begin{proof}
We may equivalently compute $\wfomc(\psi\wedge\Psi_{Pred2}\wedge Linear(\leq), n, w, \barw)/2$, which follows from Lemma \ref{lm:pred2}.
Hence, we are computing \wfomc{} over the \ctwo{} language extended by the linear order axiom.
By Theorem \ref{th:c2+lo}, that can be done in time polynomial in $n$.
\end{proof}

\noindent
We believe the encoding can be further generalized to the $k$-th predecessor, but we leave that unproven.
Although the encoding is theoretically interesting, since we express a problem seemingly requiring three logical variables using only two, it is of little practical interest.
Our algorithm's complexity is exponential in the number of cells and the definition of $Pred2$ alone has $32$ valid cells (there are $4$ Skolem predicates and the coloring may be swapped).
For that reason, we also omit any experiments on $Pred2$.

\section{Experiments}
To check our results empirically, as well as to assess how our approach scales, we implemented the proposed algorithm in the Julia programming language \citep{Julia/Julia}.
The implementation follows the algorithmic approach presented in the paper, with one notable exception.
Counting quantifiers and cardinality constraints are not handled by repeated calls to a \wfomc{} oracle and subsequent polynomial interpolation \citep{Lifting/C2-domain-liftable}.
Instead, they are processed by introducing a symbolic variable\footnote{Symbolic weights have also been recently used in probabilistic generating circuits \cite{zhang2021probabilistic} in a similar way to ours.} for each cardinality constraint and computing the polynomial (that would be interpolated) explicitly in a single run of the algorithm.
We made use of the \texttt{Nemo.jl} package \citep{Julia/Nemo} for polynomial representation and manipulation.

\subsection{Inference in Markov Logic Networks}
Using \iwfomc{}, we can perform exact lifted probabilistic inference over Markov Logic Networks that use the language of \ctwo{} with the linear order axiom.
We propose one such network over a random graph model similar to the one of Watts and Strogatz.
Then, we present inference results for that network obtained by our algorithm.

First, we review necessary background.
Then, we describe our graph model.
Finally, we present the computed results.

\subsubsection*{Markov Logic Networks}
Markov Logic Networks (abbreviated MLNs) \citep{MLNs} are a popular model from the area of statistical relational learning.
An MLN $\Phi$ is a set of weighted quantifier-free first-order logic formulas with weights taking on values from the real domain or infinity:
$$\Phi = \Set{(w_1, \alpha_1), (w_2, \alpha_2), \ldots, (w_k, \alpha_k)}$$
Given a domain $\Delta$, the MLN defines a probability distribution over possible worlds such as
\begin{align*}
    Pr_{\Phi, \Delta}(\omega) = \frac{\llbracket \omega \models \Phi_{\infty} \rrbracket}{Z} \exp\left(\sum_{(w_i, \alpha_i) \in \Phi_{\Real}} w_i \cdot N(\alpha_i, \omega) \right)
\end{align*}
where $\Phi_{\Real}$ denote the real-valued (soft) and $\Phi_{\infty}$ the $\infty$-valued (hard) formulas, $\llbracket\cdot\rrbracket$ is the indicator function, $Z$ is the normalization constant ensuring valid probability values and $N(\alpha_i, \omega)$ is the number of substitutions to $\alpha_i$ that produce a grounding satisfied in $\omega$.
The distribution formula is equivalent to the one of a Markov Random Field \citep{PGMs}.
Hence, an MLN along with a domain define a probabilistic graphical model and
inference in the MLN is thus inference over that model.

Inference (and also learning) in MLNs is reducible to \wfomc{} \citep{Lifting/Skolemization}.
For each $(w_i, \alpha_i(\Mat{x}_i)) \in \Phi_\Real$, introduce a new formula $\forall \Mat{x}_i: \xi_i(\Mat{x}_i) \Leftrightarrow \alpha_i(\Mat{x}_i)$, where $\xi_i$ is a fresh predicate, $w(\xi_i) = \exp(w_i), \overline{w}(\xi_i) = 1$ and $w(Q) =  \overline{w}(Q) = 1$ for all other predicates $Q$.
Hard formulas are added to the theory as additional constraints.
Denoting the new theory by $\Gamma$ and a query by $\phi$, we can compute the inference as
\begin{align*}
    Pr_{\Phi, \Delta}(\phi) = \frac{\wfomc(\Gamma \wedge \phi, |\Delta|, w, \barw)}{\wfomc(\Gamma, |\Delta|, w, \barw)}.
\end{align*}

\subsubsection*{Watts-Strogatz Model}
The model of Watts and Strogatz \citep{Watts-Strogatz} is a procedure for generating a random graph of specific properties.

First, having $n$ ordered nodes, each node is connected to $K$ (assumed to be an even integer) of its closest neighbors by undirected edges (discarding parallel edges).
If the sequence end or beginning are reached, we wrap to the other end.

Second, each edge $(i, j)$ for each node $i$ is \textit{rewired} with probability $\beta$.
Rewiring of $(i, j)$ means that node $k$ is chosen at random and the edge is changed to $(i,k)$.

\subsubsection*{Our Model}
We start constructing our graph model in the same manner as Watts and Strogatz, with $K=2$.
Ergo, we obtain one cyclic chain going over all our domain elements:
\begin{center}
\begin{tikzpicture}
	\node (1) at (-2,0) [shape=circle,draw] {1};
	\node (2) at (-1,0) [shape=circle,draw] {2}
		edge [-] (1);
 	\node (3) at ( 0,0) [shape=circle,draw] {3}
		edge [-] (2);
	\node (4) at ( 1,0) [shape=circle,draw] {4}
		edge [-] (3);
	\node (dots) at ( 2,0) {\ldots}
		edge [-] (4);
	\node (5) at ( 3,0) [shape=circle,draw] {n}
		edge [-] (dots)
		edge [-, bend left=30] (1);
\end{tikzpicture}
\end{center}

However, we do not perform the rewiring.
Instead, we simply add $m$ additional edges at random.
Hence, all nodes will be connected by the chain and, moreover, there will be various \textit{shortcuts} as well.

Finally, we add a weighted formula saying that \textit{friends} (friendship is represented by the edges) of smokers also smoke.
Intuitively, for large enough weight, our model should prefer those possible worlds where either nobody smokes or everybody does.

Let us now formally state the MLN that we work with:
\begin{align}
    \Phi = \{%
        &(\infty, \neg Perm(x, x)),\label{eq:mln:1}\\
        &(\infty, \neg Perm(x, y) \vee S_1(x)),\label{eq:mln:2}\\
        &(\infty, \neg Perm(x, y) \vee S_2(x)),\label{eq:mln:3}\\
        &(\infty, Pred(x, y) \Rightarrow Perm(x, y)),\label{eq:mln:4}\\
        &(\infty, Pred(x, y) \Rightarrow (x \leq y)),\label{eq:mln:5}\\
        &(\infty, |Perm| = n),\label{eq:mln:6}\\
        &(\infty, |Pred| = n - 1),\label{eq:mln:7}\\
        &(\infty, Perm(x, y) \Rightarrow E(x, y)),\label{eq:mln:8}\\
        &(\infty, E(x, y) \Rightarrow E(y, x)),\label{eq:mln:9}\\
        &(\infty, \neg E(x, y)),\label{eq:mln:10}\\
        &(\infty, |E| = 2n + 2m),\label{eq:mln:11}\\
        &(\ln w, Sm(x) \wedge E(x, y) \Rightarrow Sm(y))\label{eq:mln:12} \}
\end{align}
Senteces \ref{eq:mln:1} through \ref{eq:mln:7} have already been mentioned in the predecessor definition.
They define the basic cyclic chain, albeit a directed one.
Formula \ref{eq:mln:8} copies all $Perm/2$ transitions to $E/2$ and \ref{eq:mln:9} makes the edges undirected.
Moreover, sentence \ref{eq:mln:10} prohibits loops.
Sentence \ref{eq:mln:11} then requires that there are $n + m$ undirected edges in the graph.
As all these are hard constraints, every model must define our predefined graph model.

The only soft constraint is sentence \ref{eq:mln:12}.
By manipulating its weight, we may determine how important it is for the formula to be satisfied in an interpretation.

\subsubsection*{Inference}
We can use \iwfomc{} to run exact inference in the MLN described above.
We may query the probability that a particular domain member (element) smokes.
Obviously, the probability will be the same for any domain member.
We will thus combine all of these together and query for the probability of there being exactly $k$ smokers, instead.

Denote $\Gamma$ the theory obtained when we reduce the MLN $\Phi$ to \wfomc.
We may answer the query as
\begin{align*}
    Pr(|Sm|=k) = \frac{\wfomc(\Gamma \wedge (|Sm|=k), n, w, \barw)}{\wfomc(\Gamma, n, w, \barw)}.
\end{align*}

To relate our model to others which can be modelled without the linear order axiom, we compare the results to inference over a completely random undirected graph with the same number of edges.
Intuitively, completely random graph may form more disconnected components, thus not necessarily preferring the extremes, i.e., either nobody smokes or everybody does.
We also keep the parameter $m$ relatively small since, for large $m$, even the random graph would likely form just one connected component.
The MLN over a random graph is defined as follows:
\begin{align*}
    \Phi' = \{%
        &(\infty, E(x, y) \Rightarrow E(y, x)),\\
        &(\infty, \neg E(x, y)),\\
        &(\infty, |E| = 2n + 2m),\\
        &(\ln w, Sm(x) \wedge E(x, y) \Rightarrow Sm(y)) \}
\end{align*}

Figures \ref{fig:hist:m-5}, \ref{fig:hist:m-8} and \ref{fig:hist:m-10} depict the inference results for a domain size $n=10$ and various weights $w$.
The parameter $m$ is set to $\frac{n}{2}$, $\lceil\frac{3}{4}n\rceil$ and $n$, respectively.
As one can observe, for smaller $w$, our model approaches the binomial distribution just as the random graph model does.
With increasing $w$, the preference for extremes increases as well, and it does so in both models.
However, our model clearly prefers the extreme values more, which is consistent with our intuition above.

\section{Conclusion}
We showed how to compute \wfomc{} in \ctwo{} with linear order axiom in time polynomial in the domain size.
Hence, we showed the language of \ctwo{} extended by a linear order to be domain-liftable.
The computation can be performed using our new algorithm, \iwfomc{}.

\section*{Acknowledgements}
This work was supported by Czech Science Foundation project ``Generative Relational Models’' (20-19104Y) and partially by the OP VVV project {\it CZ.02.1.01/0.0/0.0/16\_019/0000765} ``Research Center for Informatics’’. JT’s work was also supported by a donation from X-Order Lab.

\begin{figure}
     \centering
     \begin{subfigure}[b]{0.49\linewidth}
         \centering
         \includegraphics[width=\textwidth]{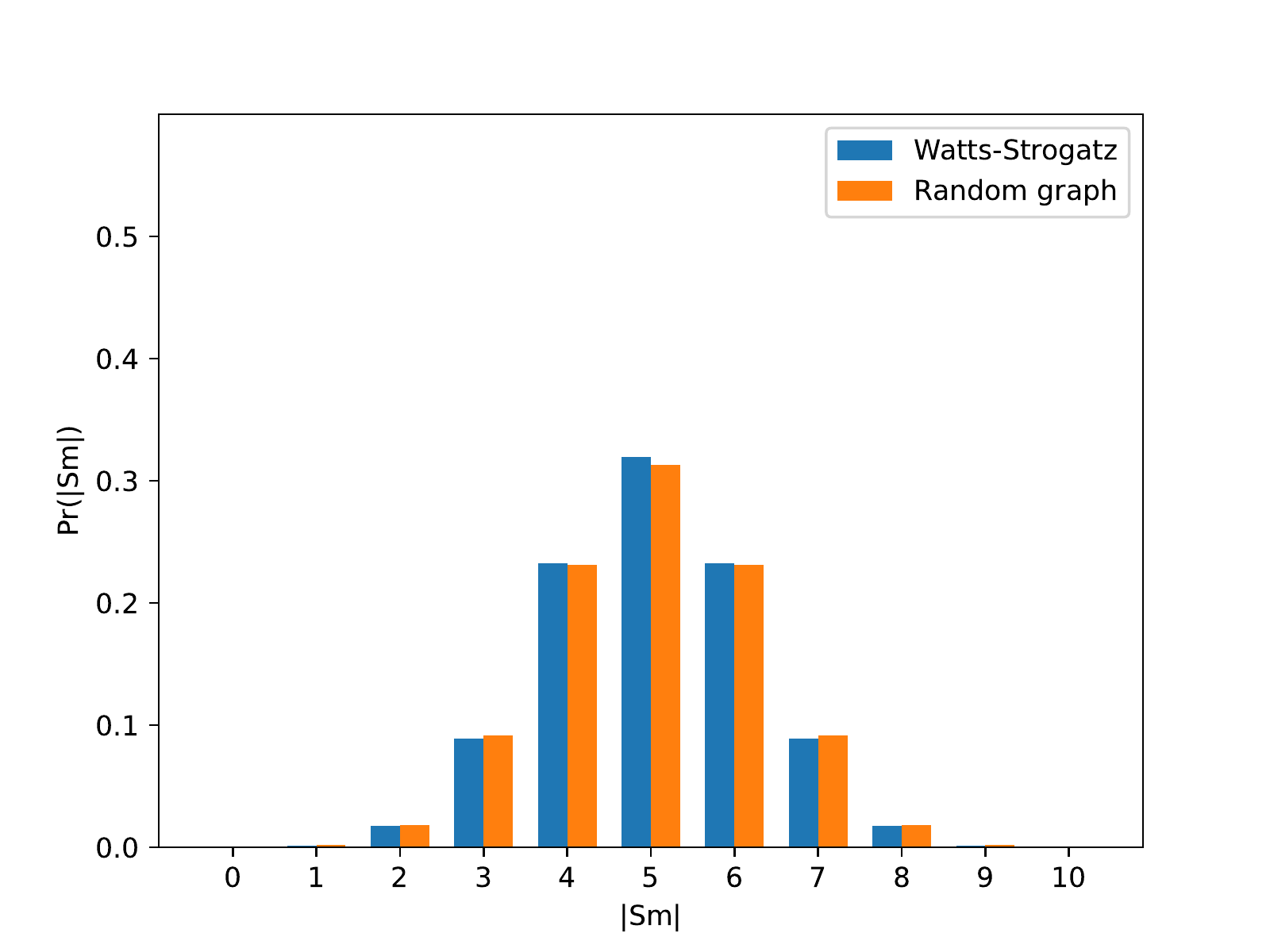}
         \caption{$w = \ln 2$}
         \label{fig:hist:w-ln2-m-5}
     \end{subfigure}
     \begin{subfigure}[b]{0.49\linewidth}
         \centering
         \includegraphics[width=\textwidth]{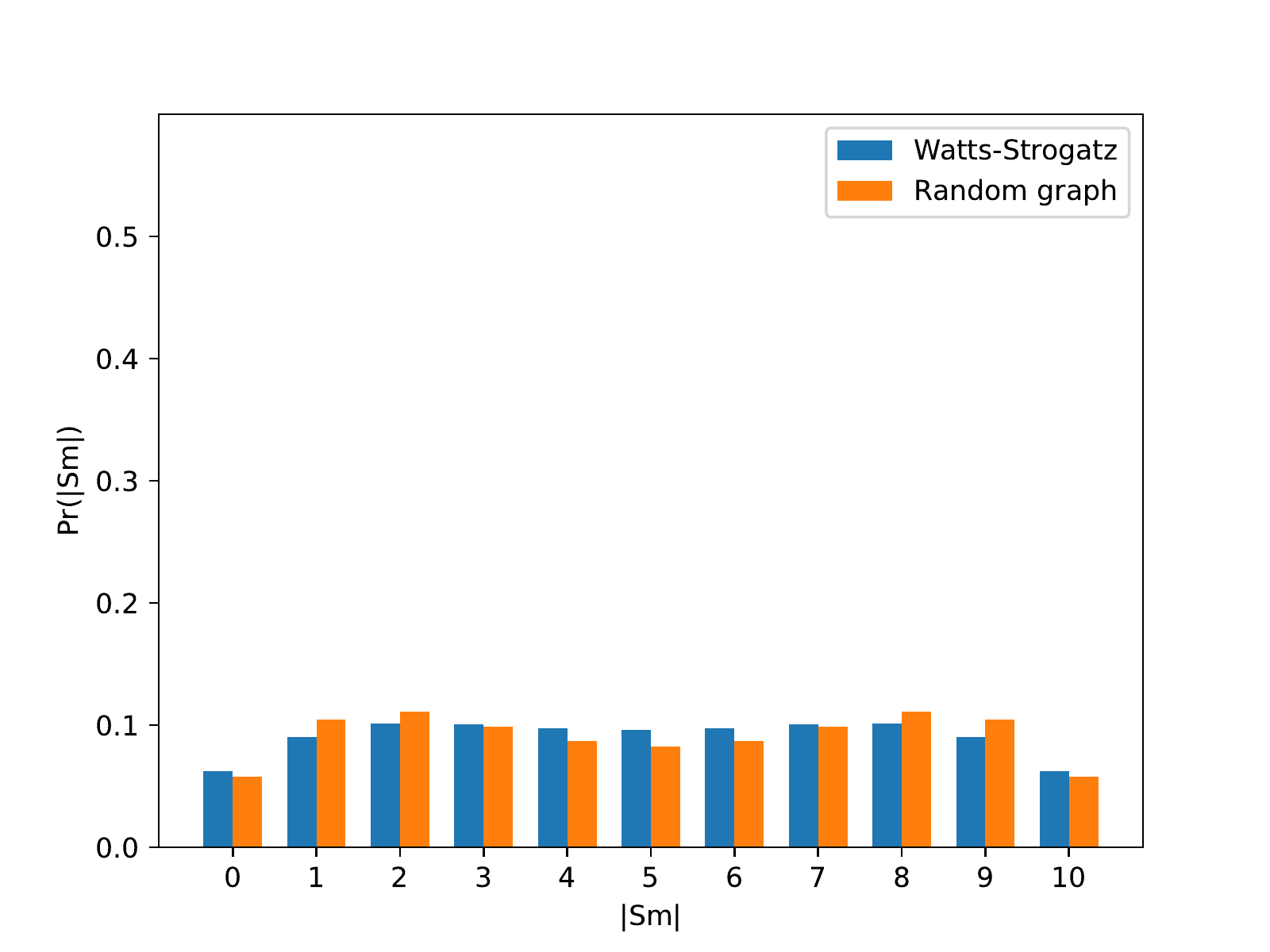}
         \caption{$w = 2$}
         \label{fig:hist:w-2-m-5}
     \end{subfigure}
     
     \begin{subfigure}[b]{0.49\linewidth}
         \centering
         \includegraphics[width=\textwidth]{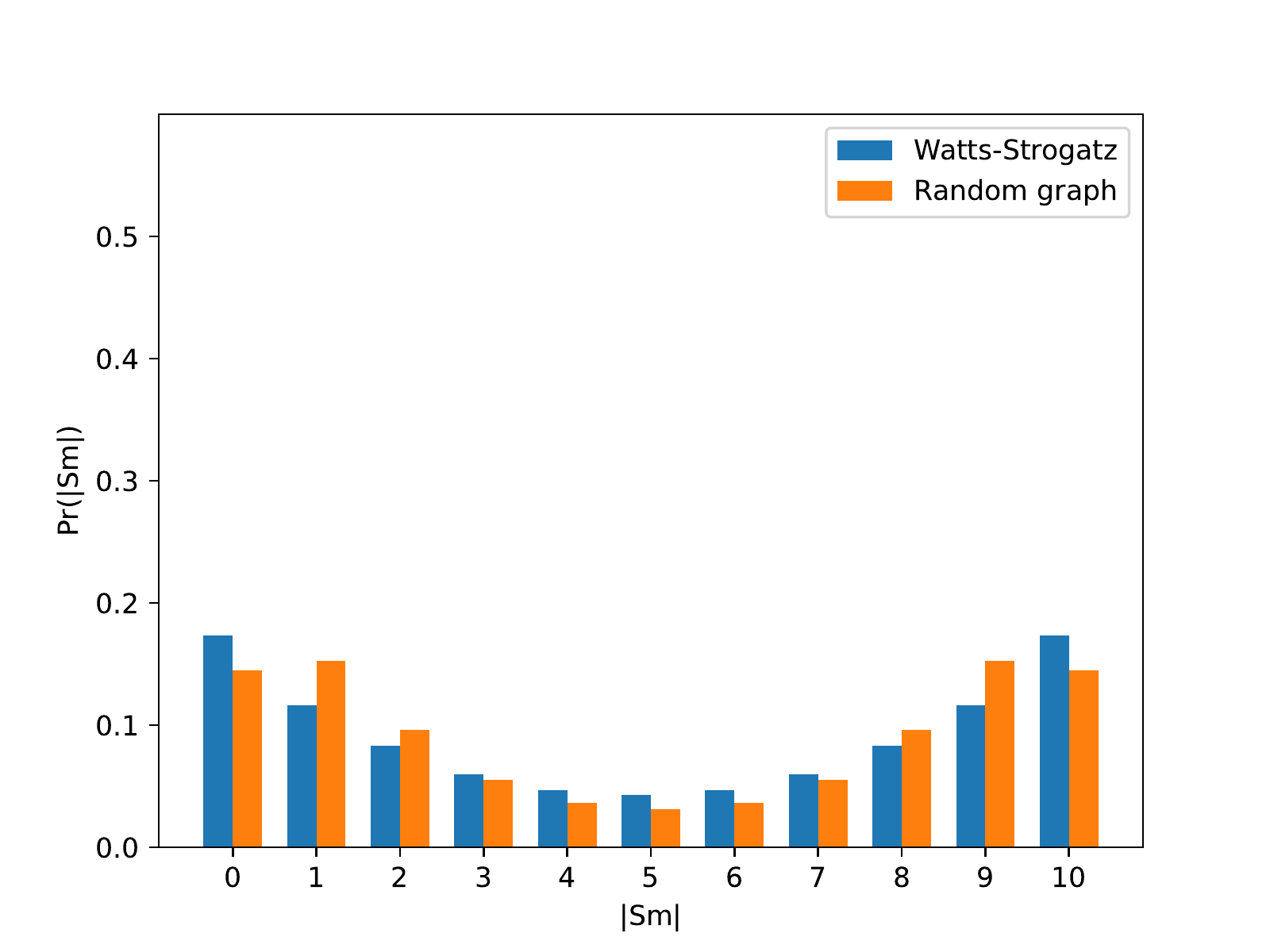}
         \caption{$w = e$}
         \label{fig:hist:w-e-m-5}
     \end{subfigure}
     \begin{subfigure}[b]{0.49\linewidth}
         \centering
         \includegraphics[width=\textwidth]{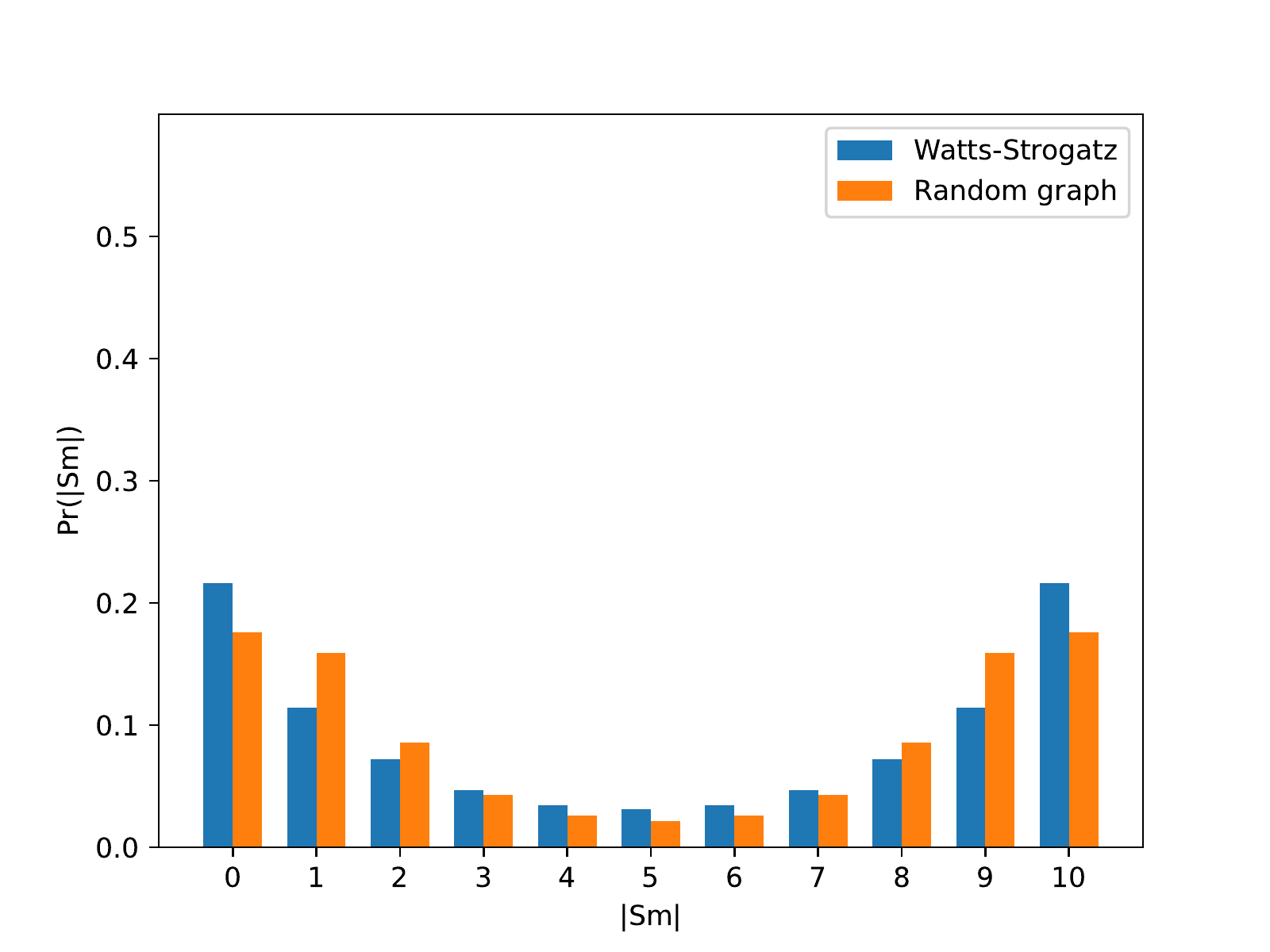}
         \caption{$w = 3$}
         \label{fig:hist:w-3-m-5}
     \end{subfigure}
    \caption{Probability of $n$ smokers for $m = 5$}
    \label{fig:hist:m-5}
\end{figure}

\begin{figure}
     \centering
     \begin{subfigure}[b]{0.49\linewidth}
         \centering
         \includegraphics[width=\textwidth]{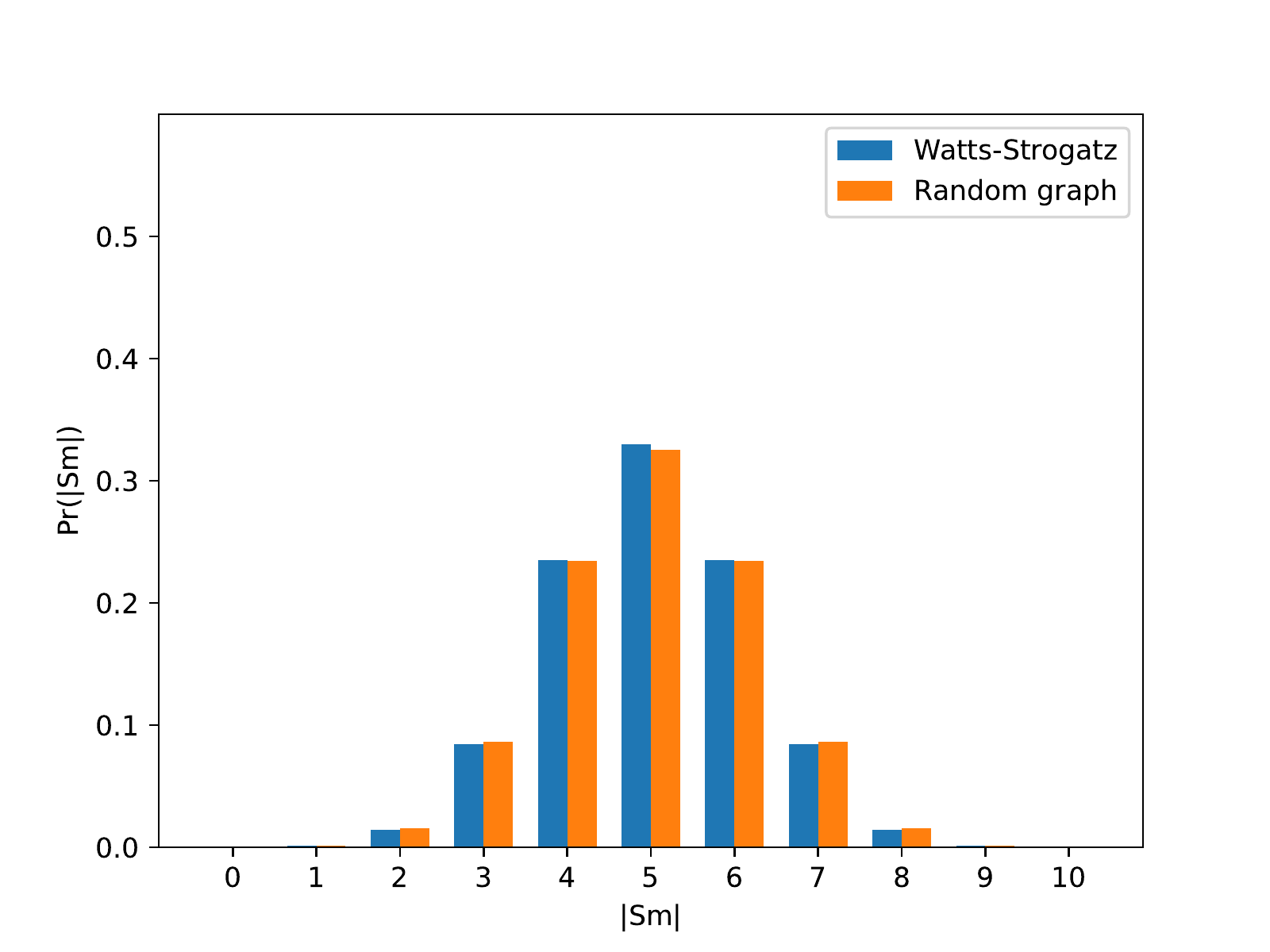}
         \caption{$w = \ln 2$}
         \label{fig:hist:w-ln2-m-8}
     \end{subfigure}
     \hfil
     \begin{subfigure}[b]{0.49\linewidth}
         \centering
         \includegraphics[width=\textwidth]{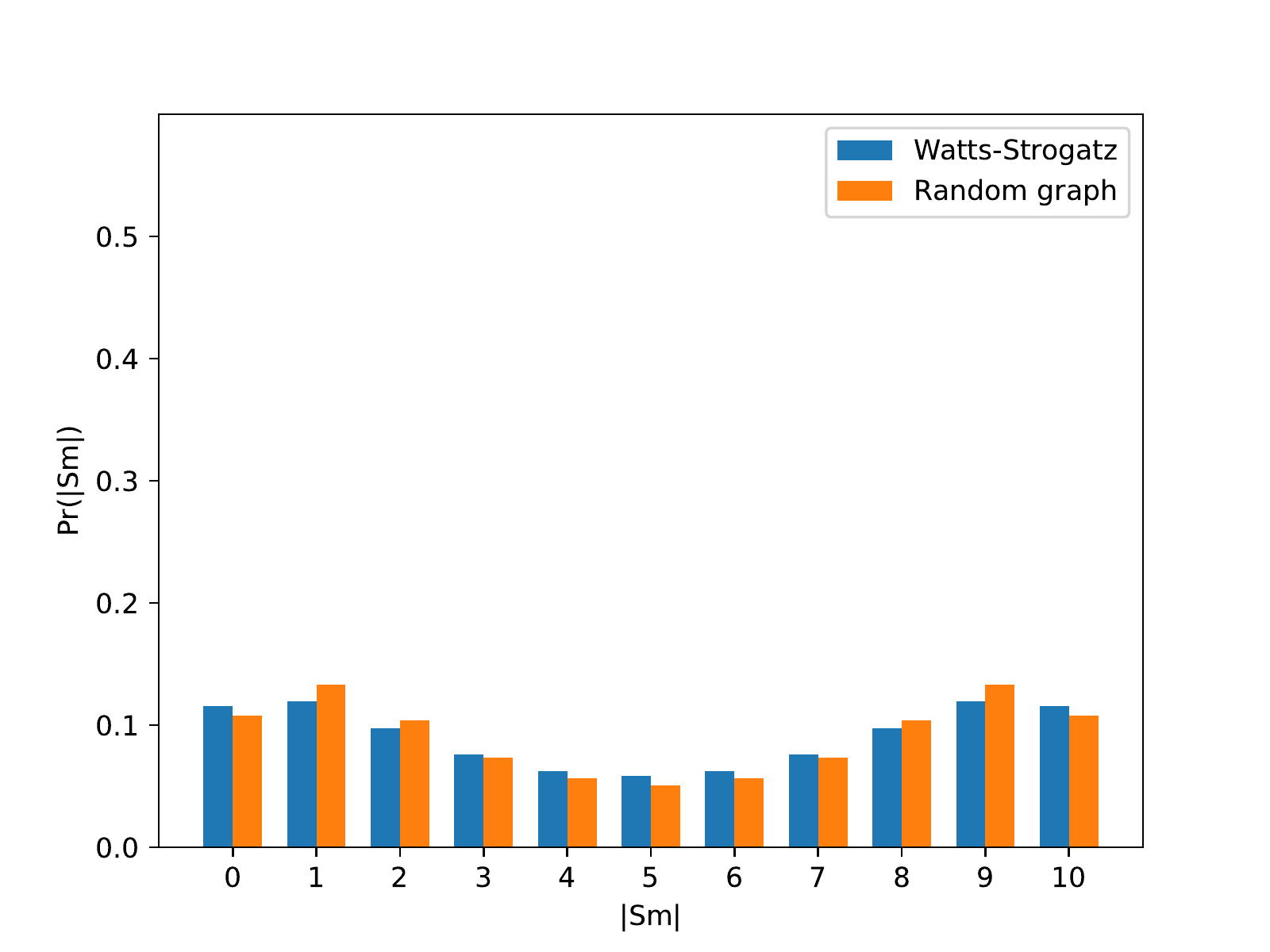}
         \caption{$w = 2$}
         \label{fig:hist:w-2-m-8}
     \end{subfigure}
     \hfil
     \begin{subfigure}[b]{0.49\linewidth}
         \centering
         \includegraphics[width=\textwidth]{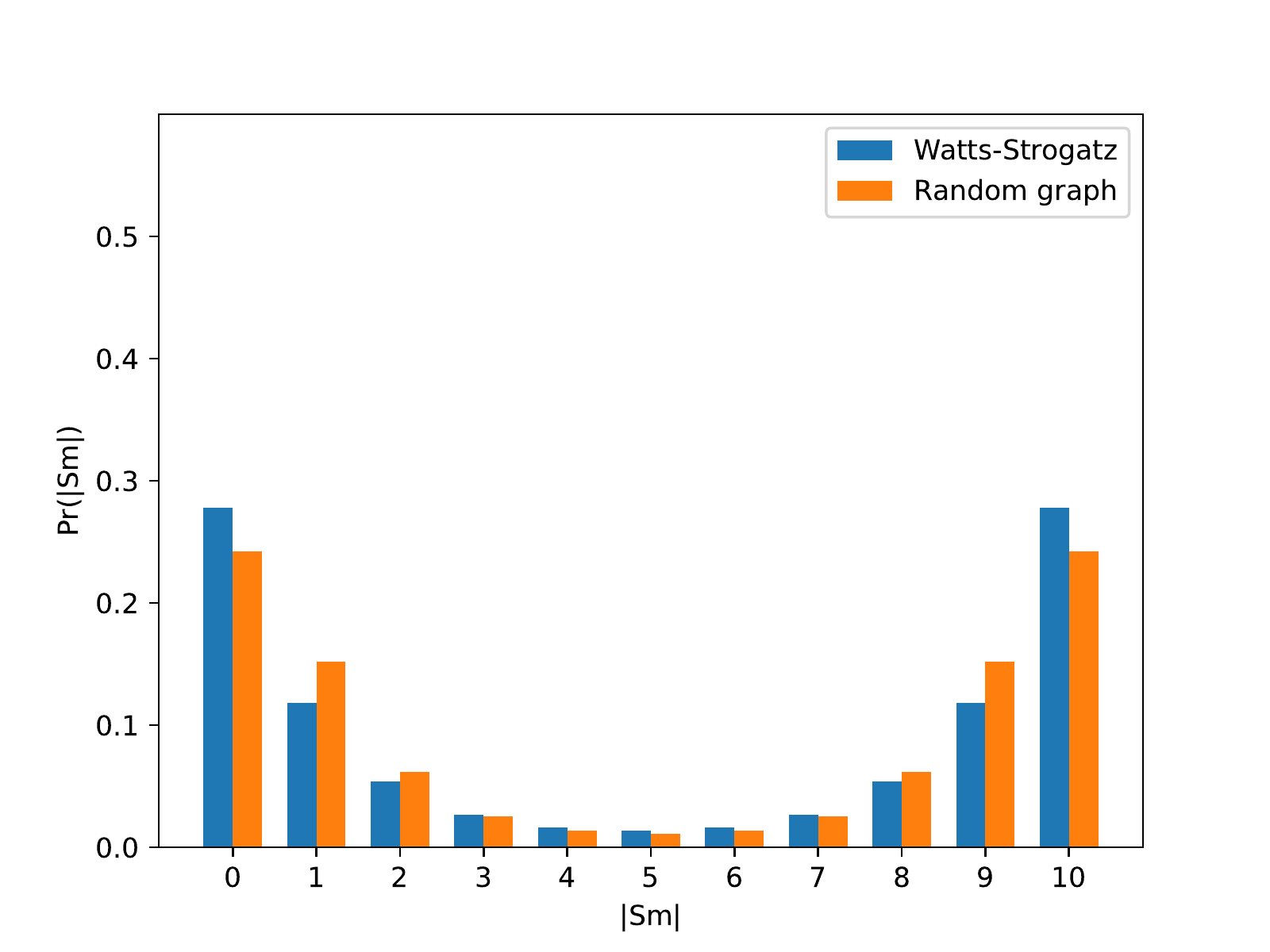}
         \caption{$w = e$}
         \label{fig:hist:w-e-m-8}
     \end{subfigure}
     \hfil
     \begin{subfigure}[b]{0.49\linewidth}
         \centering
         \includegraphics[width=\textwidth]{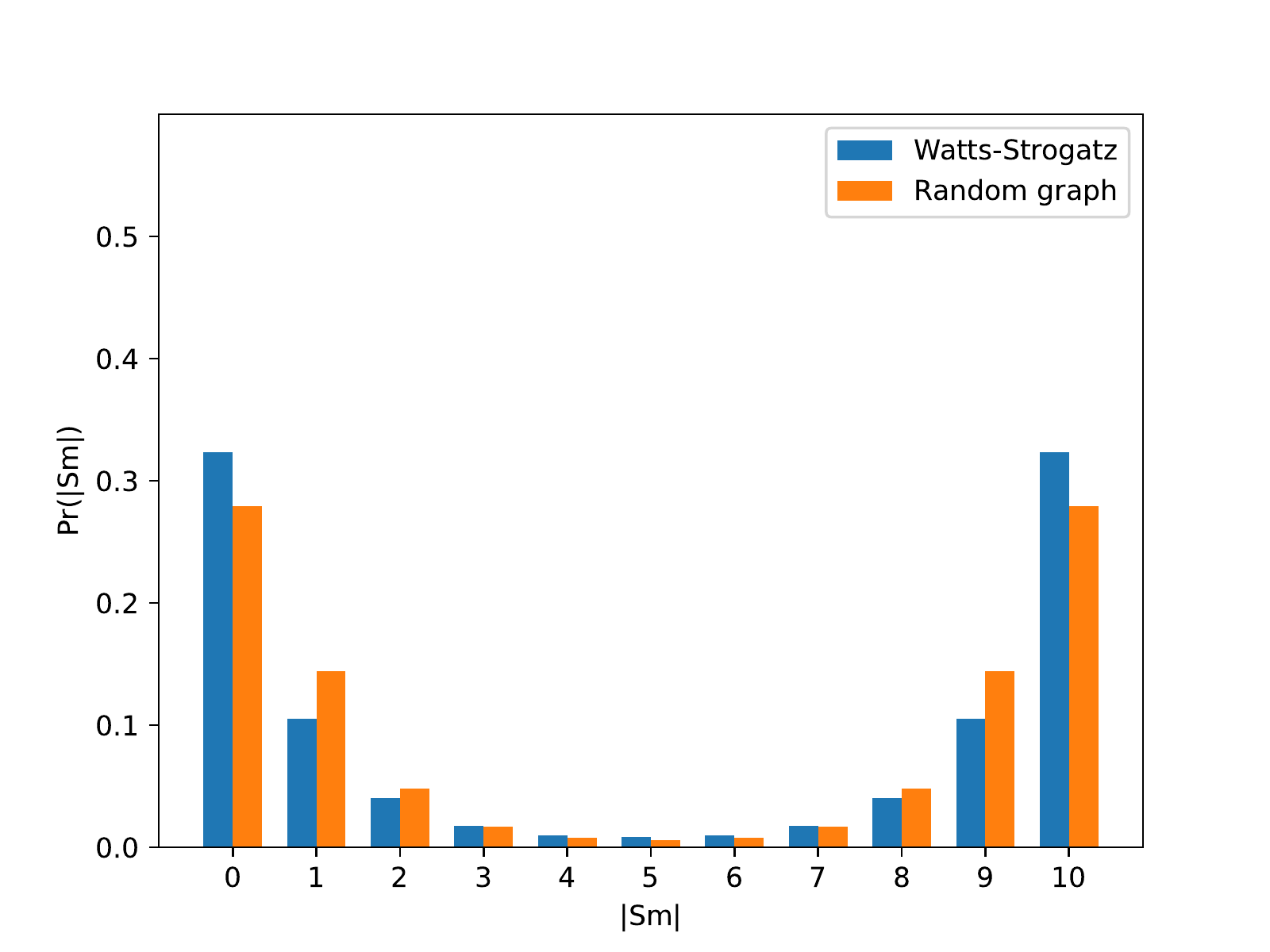}
         \caption{$w = 3$}
         \label{fig:hist:w-3-m-8}
     \end{subfigure}
    \caption{Probability of $n$ smokers for $m = 8$}
    \label{fig:hist:m-8}
\end{figure}

\begin{figure}
     \centering
     \begin{subfigure}[b]{0.49\linewidth}
         \centering
         \includegraphics[width=\textwidth]{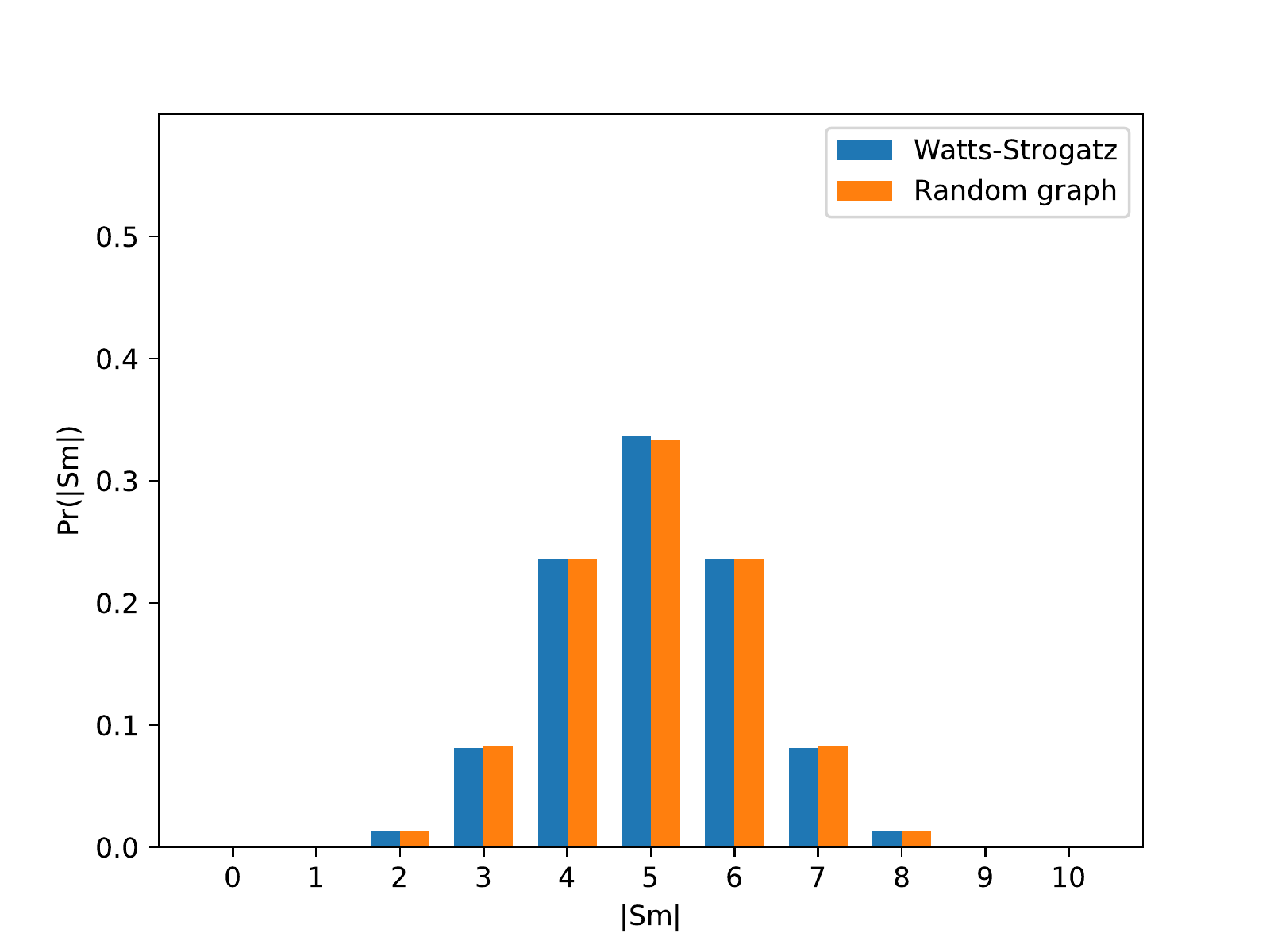}
         \caption{$w = \ln 2$}
         \label{fig:hist:w-ln2-m-10}
     \end{subfigure}
     \begin{subfigure}[b]{0.49\linewidth}
         \centering
         \includegraphics[width=\textwidth]{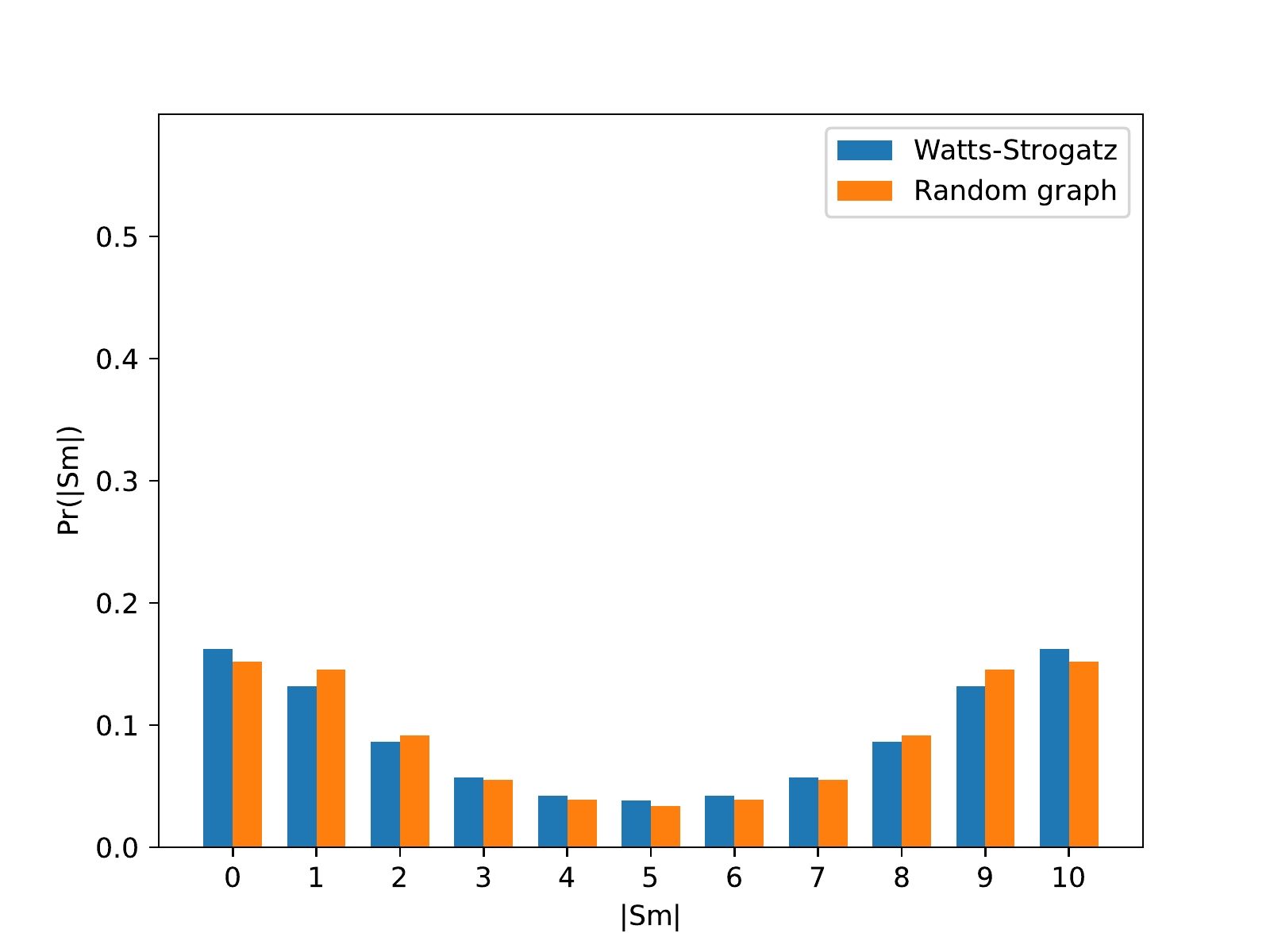}
         \caption{$w = 2$}
         \label{fig:hist:w-2-m-10}
     \end{subfigure}
     
     \begin{subfigure}[b]{0.49\linewidth}
         \centering
         \includegraphics[width=\textwidth]{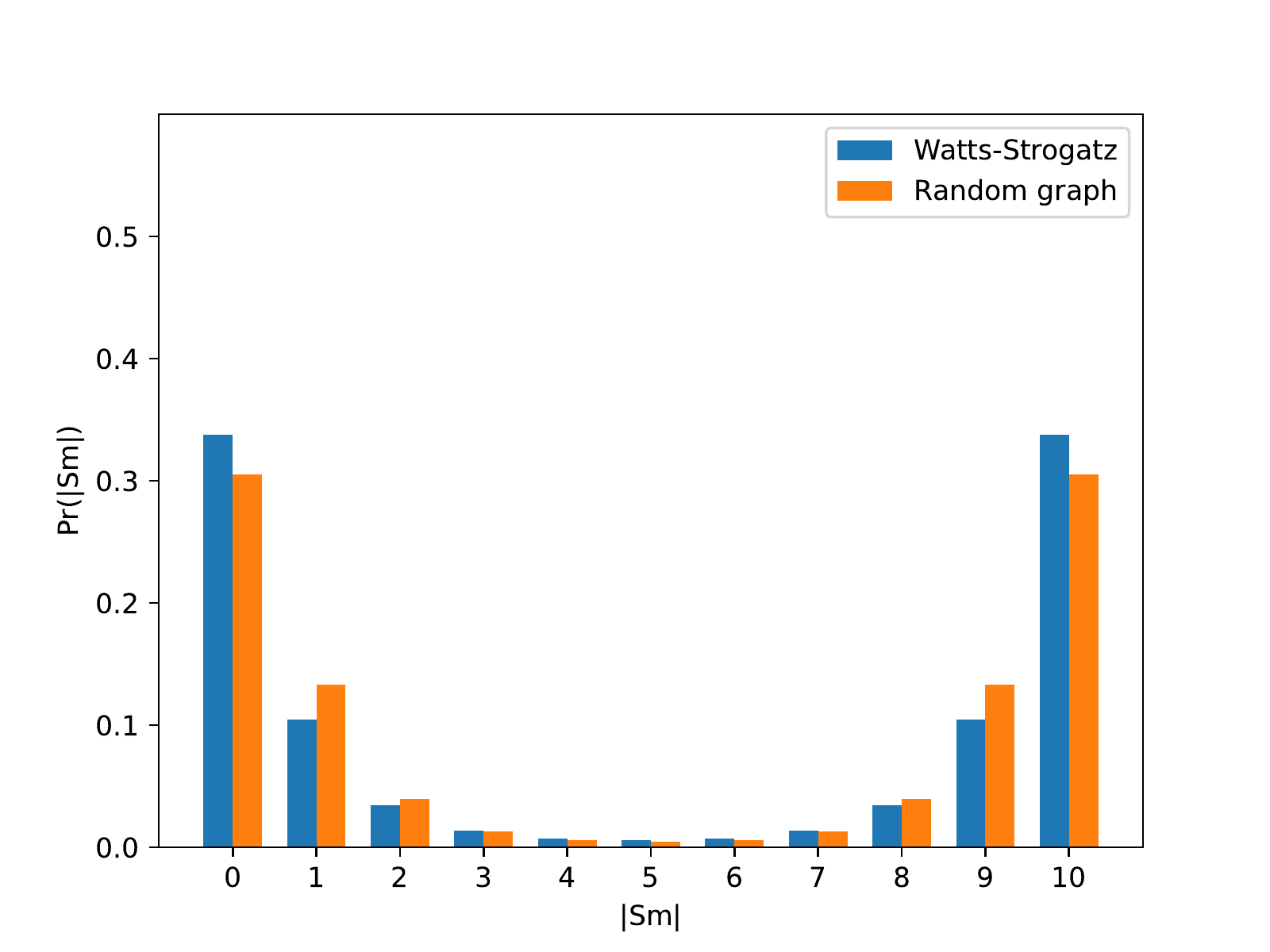}
         \caption{$w = e$}
         \label{fig:hist:w-e-m-10}
     \end{subfigure}
     \begin{subfigure}[b]{0.49\linewidth}
         \centering
         \includegraphics[width=\textwidth]{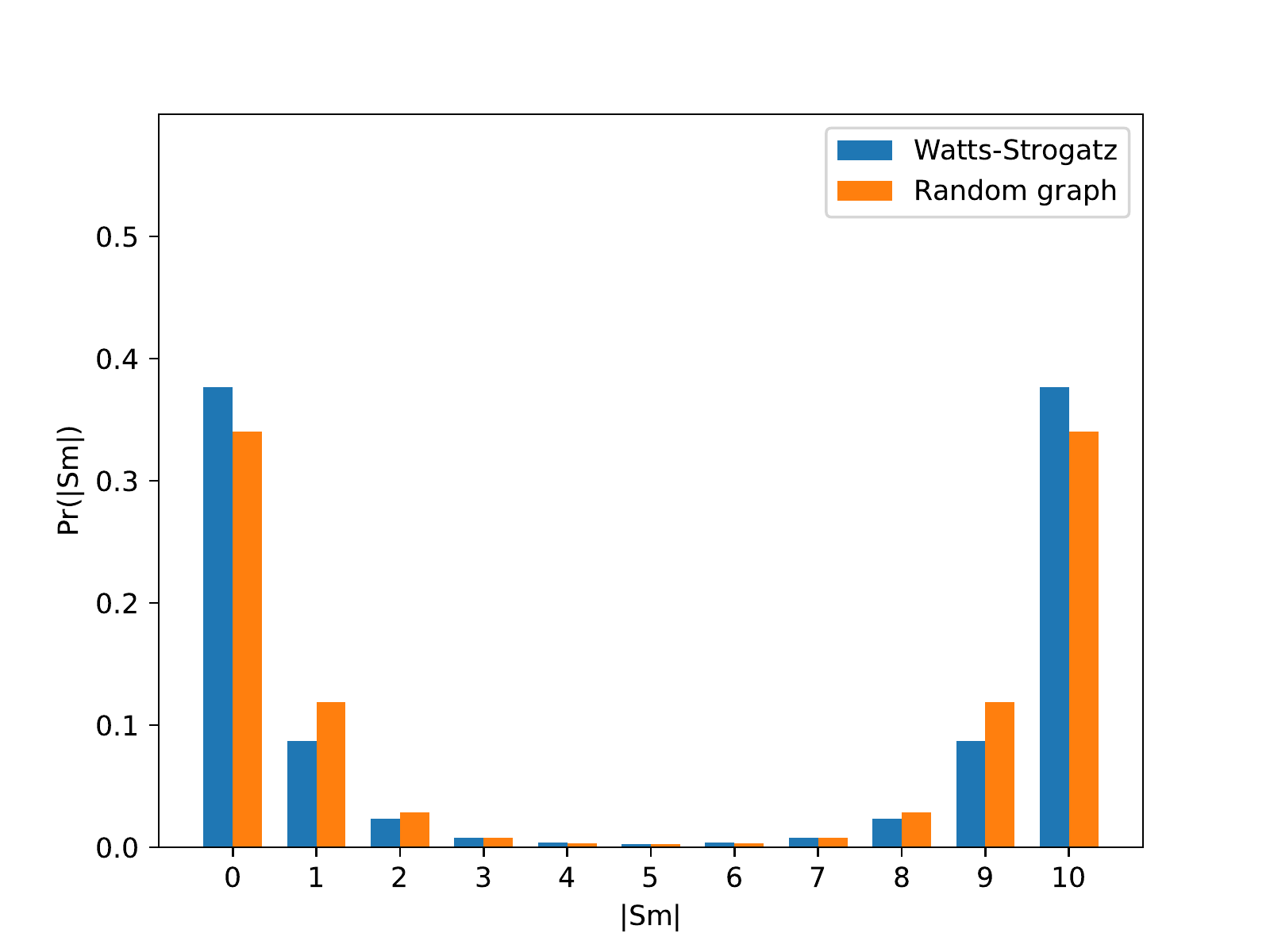}
         \caption{$w = 3$}
         \label{fig:hist:w-3-m-10}
     \end{subfigure}
    \caption{Probability of $n$ smokers for $m = 10$}
    \label{fig:hist:m-10}
\end{figure}

\newpage
\begin{appendices}
\section{Performance Measurements}
\label{app:times}
As is already stated above, we implemented \iwfomc{} in the Julia programming language.
Although our implementation is straightforward and without any further optimizations, measuring its execution times still provides us with an intuition about how the algorithm scales to larger domains that are omnipresent in real-world applications.
Figure \ref{fig:times} depicts the running times of \iwfomc{} on a few problems averaged over multiple executions.
All experiments were performed in a single thread on a computer with a 64-core AMD EPYC 7742 CPU running at speeds 2.25GHz and 512 GB of RAM.

Figure \ref{fig:seq} shows execution times for \textit{head and tail} and \textit{head, middle, tail} examples.
Figure \ref{fig:pred} depicts the running times on the formula $\phi = Linear(\leq, Pred)$, i.e., only finding the number of possible predecessor relations (of which there are $n!$ -- one for each domain ordering).

Finally, Figure \ref{fig:ws} depicts execution times of inference on our Watts-Strogatz-like model averaged over various values of $m$.
To compute the inference, we resorted to one more implementation trick.
Instead of repeatedly computing the probability for each $k \in \Set{0, 1, \ldots, n}$, we turned $w(Sm)$ into a symbolic weight.
Thus, we obtained a polynomial in $w(Sm)$ from the computation of $\wfomc(\Gamma, n, w, \barw)$.
The coefficient for each term of degree $k$ then corresponded to the unnormalized probability of $(|Sm|=k)$.
Hence, we were able to compute the entire probability distribution in one call to \iwfomc{}.
The figure depicts running times for those \textit{symbolic calls}.

\begin{figure}[h]
    \centering
    \begin{subfigure}[b]{0.49\linewidth}
        \centering
        \includegraphics[width=0.9\columnwidth]{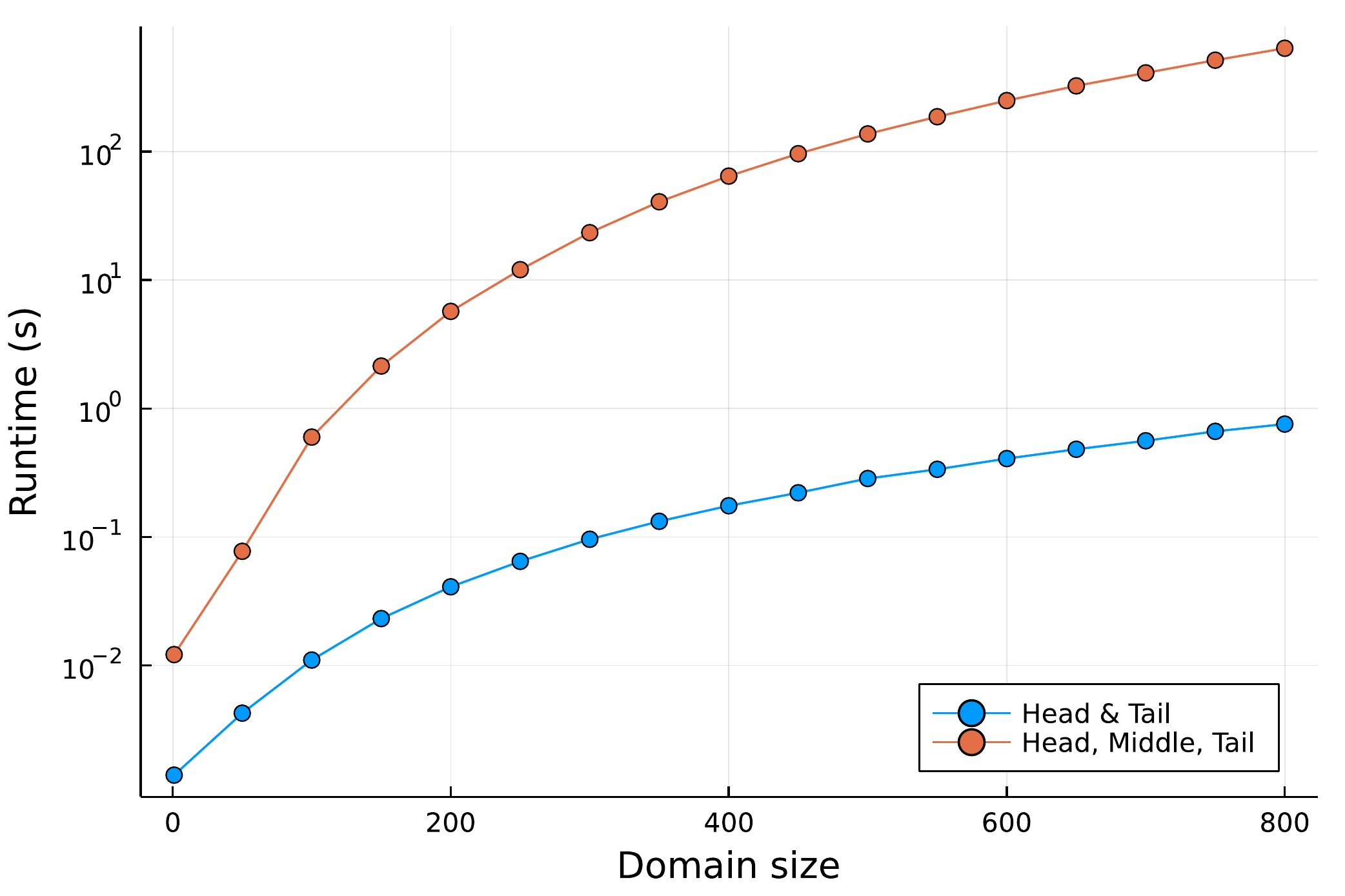}
        \caption{Runtime for counting sequence splits}
        \label{fig:seq}
    \end{subfigure}
     
    \begin{subfigure}[b]{0.49\linewidth}
        \centering
        \includegraphics[width=0.95\columnwidth]{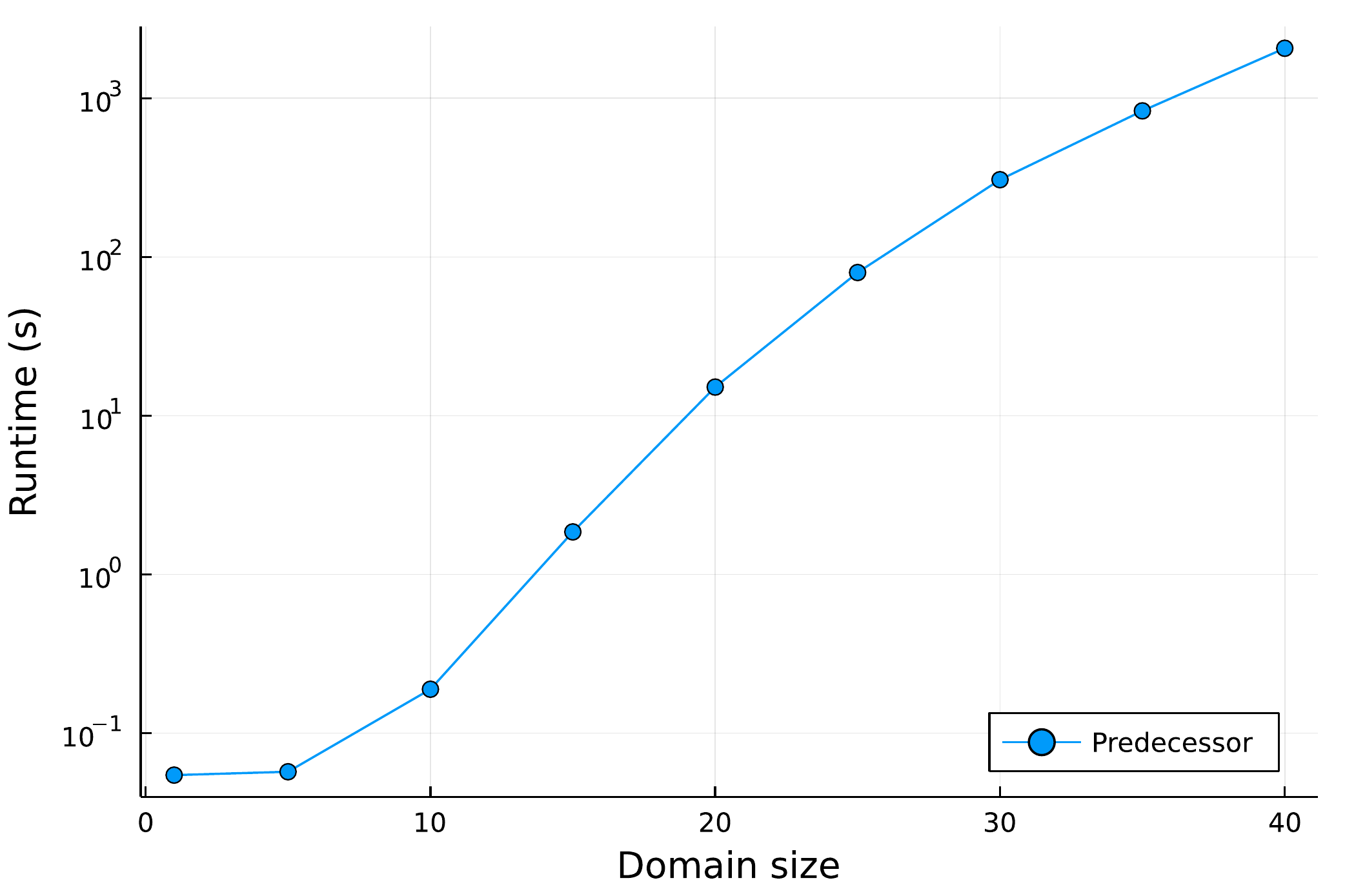}
        \caption{Runtime on the predecessor relation}
        \label{fig:pred}
    \end{subfigure}
    \begin{subfigure}[b]{0.49\linewidth}
        \centering
        \includegraphics[width=\textwidth]{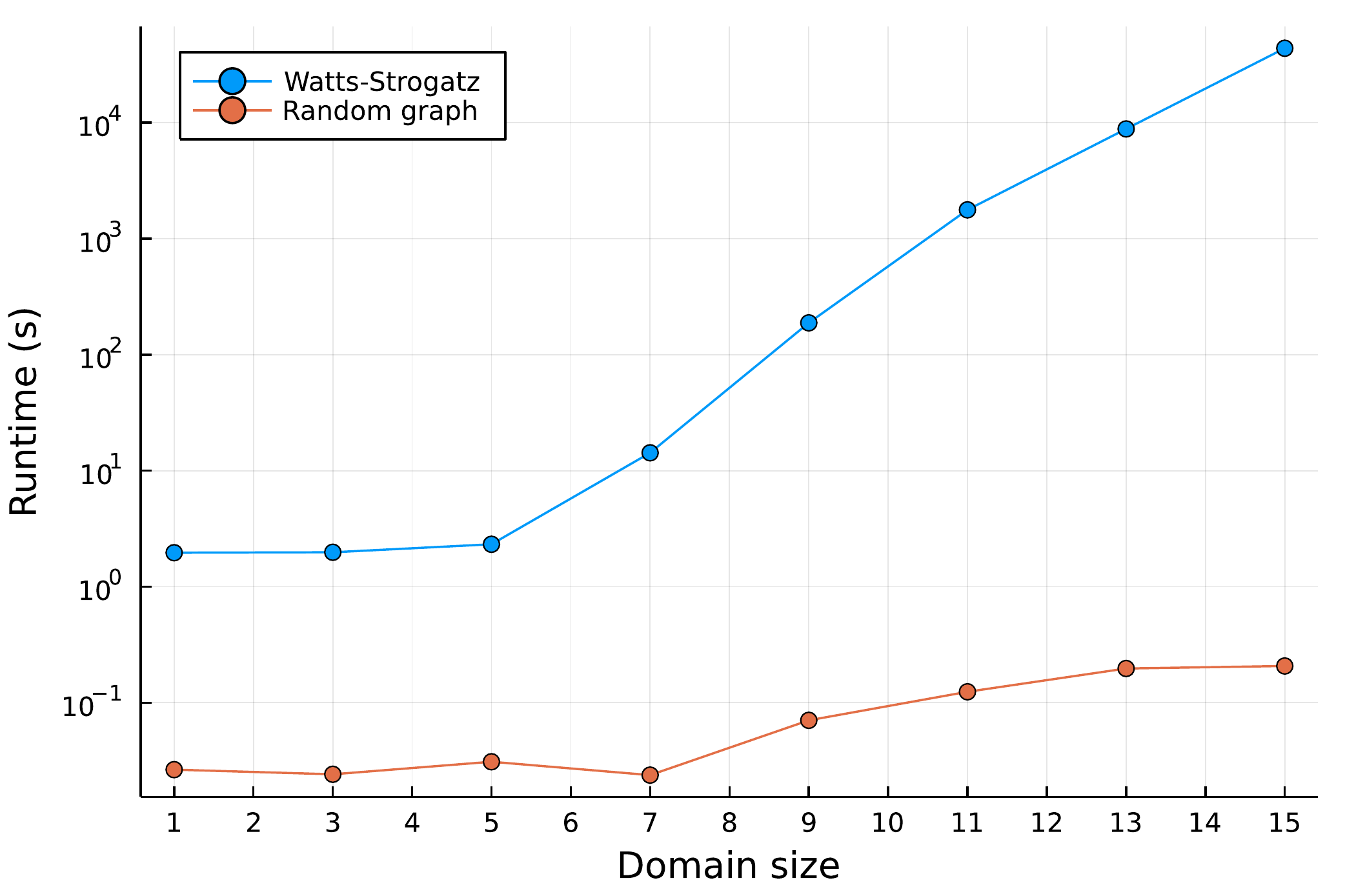}
        \caption{MLN inference runtime}
        \label{fig:ws}
    \end{subfigure}
    \caption{Execution times}
    \label{fig:times}
\end{figure}

\end{appendices}

\newpage
\bibliographystyle{named}
\bibliography{kr}

\end{document}